\documentclass{article}
\usepackage{iclr2027_conference}
\usepackage{times}
\usepackage{amsmath,amssymb}
\usepackage[margin=1in]{geometry}
\usepackage{booktabs}
\usepackage{arydshln}
\usepackage{graphicx}
\usepackage{subcaption}
\usepackage{multirow}
\usepackage{algorithm,algpseudocode}
\usepackage{amsthm}
\usepackage{soul,ulem}
\usepackage{cancel}
\newtheorem{theorem}{Theorem}

\usepackage{algorithm}
\usepackage{algpseudocode}
\usepackage{wrapfig}
\usepackage{bm}
\usepackage{array}
\usepackage{graphicx}
\usepackage{booktabs}
\usepackage{threeparttable}
\usepackage{amssymb}
\usepackage{pifont}

\algtext*{EndFor}
\algtext*{EndIf}

\usepackage{pifont}

\newcommand{\cmark}{\ding{51}}
\newcommand{\xmark}{\ding{55}}

\algrenewcommand\algorithmicrequire{\textbf{Input:}}
\algrenewcommand\algorithmicensure{\textbf{Output:}}

\usepackage{amsmath,amsfonts,bm}

\def\eqref#1{equation~\ref{#1}}

\def\1{\bm{1}}

\def\ra{{\textnormal{a}}}

\def\rx{{\textnormal{x}}}

\def\rva{{\mathbf{a}}}

\def\erva{{\textnormal{a}}}

\def\ervx{{\textnormal{x}}}

\def\rmA{{\mathbf{A}}}

\def\vmu{{\bm{\mu}}}
\def\vtheta{{\bm{\theta}}}
\def\va{{\bm{a}}}

\def\ve{{\bm{e}}}

\def\vx{{\bm{x}}}

\def\eva{{a}}

\def\mA{{\bm{A}}}

\def\mH{{\bm{H}}}
\def\mI{{\bm{I}}}
\def\mJ{{\bm{J}}}

\def\mX{{\bm{X}}}

\def\mSigma{{\bm{\Sigma}}}

\DeclareMathAlphabet{\mathsfit}{\encodingdefault}{\sfdefault}{m}{sl}
\SetMathAlphabet{\mathsfit}{bold}{\encodingdefault}{\sfdefault}{bx}{n}
\newcommand{\tens}[1]{\bm{\mathsfit{#1}}}
\def\tA{{\tens{A}}}

\def\tX{{\tens{X}}}

\def\gG{{\mathcal{G}}}

\def\sA{{\mathbb{A}}}
\def\sB{{\mathbb{B}}}

\def\sS{{\mathbb{S}}}

\def\emA{{A}}

\newcommand{\etens}[1]{\mathsfit{#1}}

\def\etA{{\etens{A}}}

\newcommand{\E}{\mathbb{E}}

\newcommand{\R}{\mathbb{R}}

\newcommand{\KL}{D_{\mathrm{KL}}}
\newcommand{\Var}{\mathrm{Var}}

\newcommand{\Cov}{\mathrm{Cov}}

\newcommand{\normltwo}{L^2}
\newcommand{\normlp}{L^p}

\newcommand{\parents}{Pa}

\usepackage{xspace}

\newcommand{\method}{$\mathtt{RIDE}$\xspace}
\newcommand{\methodonehop}{{$\mathtt{RIDE}^{\text{(1)}}$}\xspace}

\newcommand{\methodite}{{$\mathtt{RIDE}^{\text{(2)}}$}\xspace}

\newcommand{\conditar}{{conDitar}\xspace}

\newcommand{\ftconditar}{{conDitar-a}\xspace}

\newcommand{\ftconditarqed}{{conDitar-a}({QED})\xspace}

\newcommand{\ipdiff}{{IPDiff}\xspace}

\newcommand{\ftipdiff}{{IPDiff-a}\xspace}

\newcommand{\ftipdiffqed}{{IPDiff-a}({QED})\xspace}

\newcommand{\diffhopp}{{DiffHopp}\xspace}

\newcommand{\shepherd}{{ShEPhERD}\xspace}

\newcommand{\conditardev}{{conDitar-dev}\xspace}

\newcommand{\hopone}{{one hop}\xspace}
\newcommand{\hoptwo}{{two hops}\xspace}

\newcommand{\simtwod}{Sim$_{\text{2D}}$\xspace}
\newcommand{\simthreed}{Sim$_{\text{3D}}$\xspace}

\newcommand{\connectivity}{Conn.(\%)\xspace}

\usepackage{hyperref}
\usepackage{url}

\iclrfinalcopy

\title{\method: Reference-Anchored Inference-Time Diffusion Editing for Scaffold Hopping}

\author{Ruoxi Gao$^1$, Frazier N. Baker$^1$, Trieu Nguyen$^1$, and Xia Ning$^{1,2,3,4,*}$ \\
$^1$ Department of Computer Science and Engineering, The Ohio State University \\
$^2$ Department of Biomedical Informatics, The Ohio State University \\
$^3$ Translational Data Analytics Institute, The Ohio State University \\
$^4$ Division of Medicinal Chemistry and Pharmacognosy, The Ohio State University \\
$^*$Corresponding author: \url{ning.104@osu.edu}}

\begin{document}

\maketitle

\begin{abstract}

Scaffold hopping is a critical task in drug discovery, which seeks to discover new, structurally distinct molecules that share key functional groups and similar 3D shape 
with a reference binding ligand. 
Existing diffusion-based scaffold hopping methods formulate the problem 
as conditional generation of scaffolds given the functional groups. 
However, they lack a principled mechanism to 
jointly enforce 2D structural novelty and preserve the 3D shape of the reference 
ligand. 
Here, we introduce \method, 
a Reference-anchored Inference-time Diffusion Editing framework for scaffold hopping. 
\method recovers the reference diffusion noise trajectory conditioned on the binding pocket and functional groups, selects an optimal trajectory segment for editing via noise perturbation,
and conducts a value-guided scaffold sampling to generate new scaffolds. 
Extensive experimental results demonstrate that, 
compared to baselines, \method consistently generates scaffolds with lower 2D similarity and higher 3D similarity to the reference,
with an average improvements of 11.7\% and 7.3\%, respectively.
Further analysis reveals that \method can accommodate various reward functions,
and can preserve 3D similarity even when this is not explicitly included in the reward.
Two case studies illustrate \method's ability to generate distinct scaffolds with different structures and properties,
and its ability to introduce substantial 2D variation while maintaining very high 3D similarity.
\method is publicly available at \url{https://anonymous.4open.science/r/RIDE-C8A0}.

\end{abstract}

\section{Introduction}

Scaffold hopping is a critical task in drug discovery, which seeks to discover new, structurally distinct molecules that share key functional groups and similar 3D shape 
with a reference binding ligand~\citep{bohm2004scaffold}.
Effective scaffold hopping increases the success rate of drug discovery by exploring
a variety of molecule structures with similar binding but different properties, synthesizability,
and patentability~\citep{bohm2004scaffold}.
Recently, generative AI methods have opened new avenues for scaffold hopping~\citep{zheng2021deep,torge_diffhopp, yoo2024turbohopp, yang2026diffusion}.
These methods provide the ability to sample new molecular structures from distributions learned from vast datasets,
enabling the exploration of chemical spaces beyond those familiar to domain experts and 
expanding opportunities to discover novel molecular scaffolds.

Among generative AI methods,
diffusion models are particularly promising for scaffold hopping,
as their iterative denoising process enables flexible and fine-grained control over generation.
Several efforts~\citep{torge_diffhopp, adams_shepherd,yang2026diffusion} have applied
diffusion to scaffold hopping,
formulating the problem as the conditional generation of scaffolds given the functional groups.
While these methods can generate new scaffolds, 
they rely on sampling stochasticity to diversify their generated results, and  
lack a principled mechanism to promote structural novelty, leaving the extent of structural divergence from the original scaffold largely to chance.
While scaffold hopping based on diffusion models for structure-based drug design (SBDD)~\citep{guan20233d,guan_decompdiff,huang_ipdiff} can leverage protein-ligand binding information, 
it lacks explicit guarantees of substantial 2D structural novelty relative to known ligands.
Conversely, approaches based on diffusion for ligand-based drug design (LBDD)~\citep{chen2025generating,adams_shepherd} can explore structurally distinct scaffolds 
but lack explicit guidance from protein-ligand binding.

\begin{figure}[t]
\centering
\includegraphics[width=0.95\textwidth]{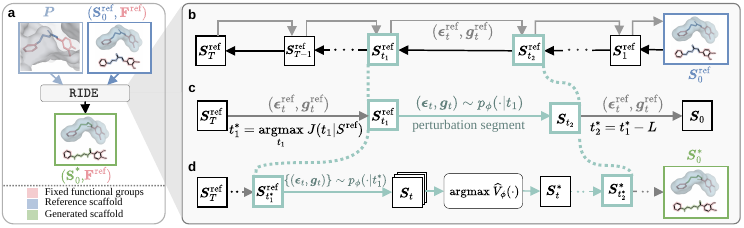}
\caption{Overview of \method. \textbf{a}, \method generates new scaffolds based on a reference scaffold, fixed functional groups, and protein pocket; \textbf{b}, \method recovers the noise trajectory for the reference ligand from a diffusion-based molecule generation model adapted to scaffold hopping; \textbf{c}, \method identifies a reference-optimal noise trajectory segment for editing; and
\textbf{d}, \method performs a value-guided transition within the reference-optimal trajectory segment for a controlled editing of the noise trajectory, thus sampling a set of new scaffolds.
}
\label{fig:overview}
\vspace{-5pt}
\end{figure}

We introduce \method, a \underline{R}eference-anchored \underline{I}nference-time \underline{D}iffusion \underline{E}diting framework for scaffold hopping. 
\method reformulates scaffold hopping as a diffusion editing process that
transforms the reference scaffold into new scaffolds.
As shown in Figure~\ref{fig:overview}, it operates in three steps:
\textbf{(1)} recovery of the reference diffusion noise trajectory conditioned on the binding pocket and functional groups,
\textbf{(2)} selection of an optimal trajectory segment for editing via noise perturbation,
and \textbf{(3)} value-guided scaffold sampling to generate new scaffolds. 
{\method} leverages a reward function measuring 2D structural novelty and 3D shape 
similarity to guide noise perturbation and scaffold sampling, 
jointly enforcing 2D structural novelty and preserving the 3D shape of the reference ligand.
As a framework for inference-time editing, 
\method can adapt pre-trained diffusion-based molecular generative models 
for controllable scaffold hopping without retraining.

We rigorously evaluate \method with state-of-the-art baseline models on 60 scaffold hopping tasks for protein-ligand complexes related to major human diseases.
Compared to baselines, \method consistently generates scaffolds with lower 2D similarity and higher 3D similarity to the reference,
with an average improvements of 11.7\% and 7.3\%, respectively.
Further analysis reveals that \method can accommodate various reward functions,
and can preserve 3D similarity even when this is not explicitly included in the reward.
\method can be applied iteratively to explore even more structurally distinct scaffolds, 
where a \method-generated molecule is used as a new ``reference'' to \method for 
further scaffold hopping.
Experiments reveal a trade-off in multiple hops between progressively improving 
the similarity profile and preserving binding. 
Additionally, two case studies illustrate \method's ability to generate distinct scaffolds with different structures and properties,
and its ability to introduce substantial 2D variation while maintaining very high 3D similarity.
\method is publicly available at \url{https://anonymous.4open.science/r/RIDE-C8A0}.

\vspace{-7pt}
\section{Related work}
\vspace{-7pt}

\paragraph{Generative models for Scaffold Hopping}

Recently, generative models have shown substantial promise in SBDD, LBDD,  and scaffold hopping~\citep{zheng2021deep,guan20233d,torge_diffhopp}.
SBDD methods generate molecules tailored to specific protein pockets without relying on a reference ligand~\citep{guan20233d,guan_decompdiff,huang_ipdiff,gu2024aligning,yangdecode,gao_conditardev}.
For example, 
\ipdiff~\citep{huang_ipdiff} incorporates learned pocket-ligand interaction embeddings into the diffusion processes to generate molecules with strong binding affinity.
Similarly,
\conditardev~\citep{gao_conditardev} trains its diffusion model to leverage pretrained pocket representations and supports inference-time trajectory optimization to optimize various molecule properties.
In contrast to SBDD, LBDD methods leverage known binding ligands as references to generate molecules with high 3D shape similarity to the reference~\citep{chen2025generating,adams_shepherd,li2026novo}.
For example, \shepherd~\citep{adams_shepherd} jointly learns molecular structures with 3D shape, electrostatics, and pharmacophore to generate molecules that resemble the reference ligand with respect to these features.
Generative models for scaffold hopping can be formulated under either SBDD or LBDD through inpainting~\citep{schneuing2024structure,li2026novo} or additional conditioning on functional groups~\citep{torge_diffhopp,yoo2024turbohopp}.
However, structure-based scaffold hopping and ligand-based scaffold hopping emphasize either binding affinity or the 3D similarity to the reference, respectively.
Moreover, current formulations do not explicitly optimize 2D similarity to the reference scaffold.
Our work instead incorporates the reference scaffold into the generation process of structure-based models,
providing a unified framework for achieving low 2D similarity and high 3D similarity while maintaining strong binding affinity.

\vspace{-5pt}
\section{Preliminaries}

\subsection{Inversion-Based Diffusion Methods}

Inversion-based diffusion methods aim to invert a reference sample into its latent diffusion
trajectory and edit the recovered trajectory for controlled generation~\citep{kim2022diffusionclip,mokady2023null,hertz2022prompt}. 
By editing the recovered trajectory, these methods can generate samples that preserve reference-specific features and introduce desired changes.
These methods build upon the reverse sampling process of diffusion models.
A common approach to recovering the reference state trajectory is DDIM inversion~\citep{song_ddim,wallace2023edict}.
Starting from a reference sample $\bm{s}_0^{\mathrm{ref}}$,
DDIM inversion maps it back to the corresponding intermediate states $\{\bm{s}_t^{\mathrm{ref}}\}_{t=1}^{T}$, where \(T\) denotes the total number of diffusion timesteps.
The inversion process is represented as
\vspace{-5pt}
\begin{equation}
    \label{eq:continuous_inversion}
    \bm{s}_t^{\mathrm{ref}}=\sqrt{\bar{\alpha}_t}
    ((\bm{s}_{t-1}^{\mathrm{ref}}-\sqrt{1-\bar{\alpha}_{t-1}} \bm{\epsilon}_\theta(\bm{s}_{t-1}^{\mathrm{ref}}, t-1))/\sqrt{\bar{\alpha}_{t-1}})
    +
    \sqrt{1-\bar{\alpha}_t} \bm{\epsilon}_\theta(\bm{s}_{t-1}^{\mathrm{ref}}, t-1) ,
\end{equation}
where $\bm{\epsilon}_\theta(\bm{s}_{t},t)$ is the noise predictor learned during diffusion training, 
$\bar{\alpha}_t\in (0, 1)$ 
is a time-dependent cumulative coefficient determined by the predefined noise schedule.
Under DDPM sampling~\citep{ho_ddpm},
each transition $\bm{s}_{t}^{\mathrm{ref}}\rightarrow \bm{s}_{t-1}^{\mathrm{ref}}$ along the recovered reference state trajectory is associated with a specific sampling noise $\bm{\epsilon}_t^{\mathrm{ref}}$. 
Therefore, the reference state trajectory can be equivalently represented by a corresponding noise sequence $\{\bm{\epsilon}_t^{\mathrm{ref}}\}_{t=1}^T$,
with each $\bm{\epsilon}_t^{\mathrm{ref}}$ recovered as follows~\citep{huberman2024edit},
\vspace{-3pt}
\begin{equation}
    \bm{\epsilon}_t^{\mathrm{ref}}=(\bm{s}^{\mathrm{ref}}_{t-1}-\bm{\mu}_\theta(\bm{s}^{\mathrm{ref}}_t, t))/\sigma_t ,
    \label{eq:continuous_noise_inversion}
\end{equation}
where $\bm{\mu}_\theta(\bm{s}_t, t)=\frac{1}{\sqrt{\alpha_t}}(\bm{s}_t-\frac{\beta_t}{\sqrt{1-\bar{\alpha}_t}} \bm{\epsilon}_\theta(\bm{s}_t, t))$ is the parameterized mean of the reverse transition distribution, $\beta_t$ is the noise variance schedule at timestep $t$,
and $\sigma_t$ is the predefined standard deviation of the transition distribution. 

Although the inversion-based methods were originally proposed for Gaussian diffusion,
the inversion can also be extended to categorical diffusion,
where each state ${s}_t^{\mathrm{ref}}$ in the reference trajectory is categorical. 
The inversion process can be constructed as follows~\citep{he2026dice},
\vspace{-10pt}
\begin{equation}
s_t^{\mathrm{ref}}
\sim
q(
s_t \mid s_{t-1}^{\mathrm{ref}}
),
\label{eq:discrete_inversion}
\end{equation}
where $q(s_t \mid s_{t-1})=\operatorname{Cat}(
\alpha^{c}_{t}
\bm{e}_{s_{t-1}})+
(1-\alpha^{c}_{t})\frac{1}{d}\mathbf{1})$
denotes the categorical forward transition~\citep{hoogeboom2021argmax}, $\bm{e}_s$ is the one-hot vector corresponding to category $s$,
$d$ is the number of discrete categories, and $\alpha_t^{c}\in (0,1)$ is predefined noise schedule at timestep $t$.
Given the recovered categorical reference trajectory $\{{s}_t^{\mathrm{ref}}\}_{t=1}^T$, 
the 
corresponding noise sequence $\{\bm{g}_t^{\mathrm{ref}}\}_{t=1}^T$ can be determined as follows,
\begin{equation}
    g_{t, k}^{\mathrm{ref}}=  [\max\nolimits_{j \neq s_{t-1}^{\mathrm{ref}}} \log p_\theta(s_{t-1}=j \mid s_t^{\mathrm{ref}})-\log p_\theta(s_{t-1}=s_{t-1}^{\mathrm{ref}} \mid s_t^{\mathrm{ref}})+\delta]_{+}, \text{if }k=s_{t-1}^{\mathrm{ref}} \text{; otherwise, }0,
    \label{eq:discrete_noise_inversion}
\end{equation}
where $p_\theta(s_{t-1} \mid s_t)$ denotes the one-step reverse categorical transition,
$\delta$ is a small positive margin to ensure that the category can be stricly selected, 
and
$[\cdot]_+=\max(\cdot,0)$ denotes the positive-part operator.

\subsection{Value-Based Methods for Inference-Time Optimization in Diffusion}

Inference-time optimization methods for diffusion models focus on modifying their
reverse process to improve generated samples with respect to specific objectives~\citep{chung2022diffusion,tang2024inference,kim2025test}.
However, most existing approaches rely on gradients~\citep{guo2026training,yang2024diffusion}, which are not often available for objectives in molecular generation~\citep{shen2025chemistry,tan2025fast}.
Value-based methods provide an alternative for non-differentiable objectives 
by estimating the expected future reward of an intermediate state $\bm{s}_t$ 
with the value function defined as $V(\bm{s}_t)=\mathbb{E}_{\bm{s}_0\sim p_\theta(\cdot\mid \bm{s}_t)}
\left[
\mathcal{R}(\bm{s}_0)
\right]$~\citep{li2024derivative,uehara2025inference},
where $\mathcal{R}(\cdot)$ is the reward function defined on the final state $\bm{s}_0$.

Recent efforts~\citep{kim2026lookahead} reformulate the value estimates using the samples generated from 
$p_\theta(\bm{s_0})$ as follows to bypass potential computational limitations in using $p_\theta(\cdot\mid \bm{s}_t)$~\citep{li2024derivative,jain2025diffusion}:
\begin{equation}
V(\bm{s}_t)=\mathbb{E}_{p_\theta(\bm{s}_0)}\left[q(\bm{s}_t \mid \bm{s}_0) \mathcal{R}(\bm{s}_0)\right]/\mathbb{E}_{p_\theta(\bm{s}_0)}\left[q(\bm{s}_t \mid \bm{s}_0)\right].
\label{eq:value_est_lidar}
\end{equation}
This reformulation enables $V(\bm{s}_t)$ to be estimated in two steps: (1) perform 
$\bm{s}_t$-independent lookahead sampling from the noise prior $q(\bm{s}_T)$ to obtain $\bm{s}_0$ and $\mathcal{R}(\bm{s}_0)$, and (2) compute $V(\bm{s}_t)$ in closed form by connecting $\bm{s}_t$ to the lookahead samples $\bm{s}_0$ through the forward transition $q(\bm{s}_t\mid \bm{s}_0)$.

\section{Methods}

\subsection{Problem Definition and Notations}
\label{sec:formulation}

A ligand can be decomposed into a scaffold and a set of functional groups. 
The functional groups correspond to atoms involved in key interactions with the ligand's binding pocket,
and the scaffold atoms define the molecular backbone connecting them.
Scaffold hopping seeks to identify ligands that preserve the 3D shape and binding pattern of a known reference ligand (or ``\textit{reference}") and that are also structurally distinct from it in 2D.
Accordingly, 
we represent a ligand as $\bm{M}=(\bm{S},\bm{F})$, where $\bm{S}$ denotes the scaffold and $\bm{F}$ denotes the functional groups,
and represent the protein pocket as $\bm{P}$. 
$\bm{S}$, $\bm{F}$, and $\bm{P}$ are all represented as sets of atoms involved, 
with each atom $\bm{a}$ 
characterized by a continuous position $\bm{x}\in\mathbb{R}^3$ and a categorical atom type $v \in \{1,\ldots,d\}$.

Given a pocket $\bm{P}$ and a reference ligand 
$\bm{M}^{\mathrm{ref}}=(\bm{S}^{\mathrm{ref}},\bm{F}^{\mathrm{ref}})$
bound to $\bm{P}$, 
scaffold hopping can be formulated as generating 
a new ligand
$\bm{M}=(\bm{S},\bm{F}^{\mathrm{ref}})$
by replacing the reference scaffold $\bm{S}^{\mathrm{ref}}$ with a different,
generated scaffold $\bm{S}$.
Existing methods~\citep{torge_diffhopp, yoo2024turbohopp} typically formulate this problem as conditional scaffold generation 
using a diffusion model, that is, $\bm{S} \sim p_\theta(\cdot \mid \bm{P}, \bm{F}^{\mathrm{ref}})$, parameterized by $\theta$,
where the reference scaffold $\bm{S}^{\mathrm{ref}}$ is not involved in the generation process.
As a result, the diffusion model cannot explicitly exploit the structural features of the reference scaffold to tailor generation to the desired 2D and 3D similarity profile.

Here, we propose \method (Figure~\ref{fig:overview}), 
a novel reference-anchored editing framework that reformulates scaffold hopping as a principled transformation from the reference scaffold $\bm{S}^{\mathrm{ref}}$ to a set of new scaffolds $\{\bm{S}\}$.
This formulation enables critical reference-specific features, such as the 3D shape and binding pose, 
to be preserved while introducing desired 2D changes to the scaffolds.
To implement this formulation,
\method first adapts
state-of-the-art
SBDD models to scaffold hopping to construct the base model $p_\theta(\bm{S} \mid \bm{P}, \bm{F}^{\mathrm{ref}})$, with details described in Appendix~\ref{app:finetune_SBDD},
and propose an inference-time method that modifies its sampling process to obtain a new distribution $\tilde{p}_{\theta}(\bm{S}\mid \bm{P},\bm{F}^{\mathrm{ref}};\bm{S}^{\mathrm{ref}})$ anchored to 
$\bm{S}^{\mathrm{ref}}$ (Section~\ref{sec:reference_edit}).
Thus, the new distribution can be identified by solving the following new optimization problem:
\begin{equation}
\max\nolimits_{\tilde{p}_{\theta}} 
\mathbb{E}_{\bm{S}\sim\tilde{p}_{\theta}(\cdot\mid \bm{P},\bm{F}^{\mathrm{ref}};\bm{S}^{\mathrm{ref}})}
\left[
\mathcal{R}(\bm{S},\bm{S}^{\mathrm{ref}})
\right], 
\label{single_hop}
\end{equation}
where $\mathcal{R}()$ is the reward function defined as follows:
\begin{equation}
\label{eqn:reward}
\mathcal{R}(\cdot,\bm{S}^{\mathrm{ref}}) = 
    \lambda(1-\operatorname{Sim}_{2 \mathrm{D}}(\cdot, \bm{S}^{\mathrm{ref}}))+(1-\lambda) \operatorname{Sim}_{3 \mathrm{D}}(\cdot, \bm{S}^{\mathrm{ref}}),
\end{equation}
which considers 2D similarity $\operatorname{Sim}_{2 \mathrm{D}}$ and 
3D similarity $\operatorname{Sim}_{3 \mathrm{D}}$ to $\bm{S}^{\mathrm{ref}}$; 
and $\lambda\in [0,1]$ balances the two similarities.
The reward is computed only for structurally valid scaffolds and is set to $0$ for invalid ones.

\subsection{Reference-Anchored Scaffold Generation via Noise Editing}
\label{sec:reference_edit}

Our goal is to construct the generation distribution $\tilde{p}_\theta(\bm{S} \mid \bm{P}, \bm{F}^{\mathrm{ref}} ; \bm{S}^{\mathrm{ref}})$ from the base scaffold generation distribution $p_\theta(\bm{S} \mid \bm{P}, \bm{F}^{\mathrm{ref}})$.
To address the competing objectives of
low 2D similarity and high 3D similarity with 
$\bm{S}^{\mathrm{ref}}$
desired by the generated scaffolds, 
we anchor scaffold generation to $\bm{S}^{\mathrm{ref}}$ 
retain 3D shape, and 
apply a new value-guided sampling to encourage simultaneous 2D deviation. 
The algorithm is summarized in Appendix~\ref{appx:algorithm}.

\subsubsection{Reference Noise Trajectory Recovery}
\label{sec:traj_recover}
To establish a reference anchor for controlled scaffold
generation during inference time,
\method first recovers
the state and noise trajectory reproducing $\bm{S}^{{\mathrm{ref}}}$  
under the base diffusion model $p_\theta(\bm{S} \mid \bm{P}, \bm{F}^{\mathrm{ref}})$, as illustrated in Figure~{\ref{fig:overview}}b.
We represent 
$\bm{S}^{\mathrm{ref}}$ as 
$(\bm{X}^{{\mathrm{ref}}}, \bm{V}^{{\mathrm{ref}}})$, where $\bm{X}^{{\mathrm{ref}}}$ and $\bm{V}^{{\mathrm{ref}}}$ denote the scaffold atom positions and atom types, respectively.
As the base model employs Gaussian diffusion for atom positions and categorical diffusion for atom types,
we recover the reference position trajectory
$\{\bm{X}_t^{{\mathrm{ref}}}\}_{t=0}^T$ and 
reference atom-type trajectory
$\{\bm{V}_t^{{\mathrm{ref}}}\}_{t=0}^T$
  using Eq.~\ref{eq:continuous_inversion} and Eq.~\ref{eq:discrete_inversion}, respectively, 
with $\bm{X}_0^{\mathrm{ref}}=\bm{X}^{\mathrm{ref}}$ and
$\bm{V}_0^{\mathrm{ref}}=\bm{V}^{\mathrm{ref}}$.
The corresponding position noise trajectory $\{\bm{\epsilon}_t^{{\mathrm{ref}}}\}_{t=1}^T$ and atom-type noise trajectory $\{\bm{g}_t^{{\mathrm{ref}}}\}_{t=1}^T$ are then 
derived
from the reference state trajectories using Eq.~\ref{eq:continuous_noise_inversion} and Eq.~\ref{eq:discrete_noise_inversion}, respectively.

Here,
we denote the reverse transition with a specific noise sequence from timestep $t_1$ to $t_2$ ($T \ge t_1 > t_2 \ge 0$) as 
\begin{equation}
\mathcal{T}_{\theta,t_1:t_2}(
\bm{S}_{t_1},
\{(\bm{\epsilon}_\tau,\bm{g}_\tau)\}_{\tau=t_2+1}^{t_1}
; \bm{P}, \bm{F}^{\mathrm{ref}}),
\end{equation}
and the transition using the reference noise as
\begin{equation}
\mathcal{T}_{\theta,t_1:t_2}^{\mathrm{ref}}(\bm{S}_{t_1}; \bm{P}, \bm{F}^{\mathrm{ref}})
=
\mathcal{T}_{\theta,t_1:t_2}(
\bm{S}_{t_1},
\{(\bm{\epsilon}_\tau^{\mathrm{ref}},\bm{g}_\tau^{\mathrm{ref}})\}_{\tau=t_2+1}^{t_1}
; \bm{P}, \bm{F}^{\mathrm{ref}}).
\end{equation}
Thus, we have
\vspace{-10pt}
\begin{equation}
\bm{S}_t^{\mathrm{ref}}=\mathcal{T}^{\mathrm{ref}}_{\theta, T: t}(\bm{S}_T^{\mathrm{ref}}; \bm{P}, \bm{F}^{\mathrm{ref}}).
\end{equation}
For notational simplicity, we omit $\bm{P}$ and $\bm{F}^{\mathrm{ref}}$ in $\mathcal{T}_\theta$ below.

\subsubsection{Reference Trajectory Editing via Noise Perturbation}
\label{sec:traj_perturb}

\method explores alternative scaffolds similar to
the reference via reference trajectory editing, 
and implement this as noise perturbation over a segment of the reference noise trajectory 
between $t_1$ and $t_2$, with $t_2 = \max(t_1 - L, 0)$, where $L>0$ is the segment length. 
Specifically, we design a perturbation distribution 
$p_{\phi}(\bm{\epsilon}_t,\bm{g}_t \mid t_1;\bm{P}, \bm{F}^{\mathrm{ref}}, \{(\bm{\epsilon}_t^{\mathrm{ref}},\bm{g}_t^{\mathrm{ref}})\}_{t=t_2+1}^{t_1},L)$ 
over the atom position noise 
$\bm{\epsilon}_t$ and type noise $\bm{g}_t$ as follows 
($\bm{P}$, $\bm{F}^{\mathrm{ref}}$, $\{(\bm{\epsilon}_t^{\mathrm{ref}},\bm{g}_t^{\mathrm{ref}})\}_{t=t_2+1}^{t_1}$ and $L$ are omitted below for simplicity):
\begin{equation}
    p_{\phi}
    (\bm{\epsilon}_t,\bm{g}_t \mid t_1)
    =
    p_{\phi_{{\epsilon}}}(\bm{\epsilon}_t \mid t_1)
    p_{\phi_{{g}}}(\bm{g}_t \mid t_1),
    \label{eq:noise_perturb}
\end{equation}
where
\vspace{-13pt}
\begin{equation}
p_{\phi_{\epsilon}}(\bm{\epsilon}_t \mid t_1)
=
\begin{cases}
\mathcal{N}(\bm{\epsilon}_t;\bm{0},\bm{I}),
& t_1 \ge t > t_2\\
\delta(\bm{\epsilon}_t-\bm{\epsilon}_t^{\mathrm{ref}}),
& \text{otherwise},
\end{cases}
\qquad
p_{\phi_g}(\bm{g}_t \mid t_1)
=
\begin{cases}
\operatorname{Gumbel}(\bm{g}_t;0,1),
&t_1 \ge t > t_2,\\
\delta(\bm{g}_t-\bm{g}_t^{\mathrm{ref}}),
& \text{otherwise},
\end{cases}
\end{equation}
$\delta(\cdot)$ is the Dirac delta function. 
That is, \method explores different trajectories by editing the reference noise $(\bm{\epsilon}^{\mathrm{ref}}_t, \bm{g}_t^{\mathrm{ref}})$ between $t_1$ and $t_2$,
 replacing it with sampled noise from $(\mathcal{N}(\bm{0},\bm{I}), \operatorname{Gumbel}(0,1))$, 
with the remaining noise intact.
Unlike existing inversion-based methods that directly modify recovered intermediate states, 
we perturb the trajectories by resampling the diffusion noise at selected timesteps. 
This avoids the difficulty of designing appropriate state-space edits and reduces the risk of moving the generation 
too far from the diffusion manifold.
Meanwhile, retaining a majority of the original reference noise trajectories outside 
$t_1$ and $t_2$ provides a strong mechanism for preserving 3D shape and binding pose enabled by $p_{\theta}$.

\paragraph{Reference-Optimal Perturbation Segment Selection}

The optimal trajectory segment to edit
depends on the reference ligand. Thus, 
\method adopts a reference-specific strategy to optimally select 
a noise trajectory segment, illustrated in Figure~\ref{fig:overview}c.
\method first discretizes the search space of $t_1$ as $t_1 \in\{ \tau_n = \frac{n T}{N}: n=n_1, \ldots, n_2\}$, 
where $N>0$ denotes the number of uniformly sampled timesteps along the full diffusion trajectory, 
and $n_1$ and $n_2$ specify the allowed range for $t_1$. 
For each candidate 
$t_1$, we generate $K$ Monte Carlo samples following Eq.~\ref{eq:noise_perturb} 
and define a score $J$ to quantify $t_1$ over 
the $K$ samples as follows:\vspace{-10pt}
\begin{equation}
    J(t_1\mid \bm{S}^{\mathrm{ref}})=\frac{1}{K} \sum\nolimits_{k=1}^K\left[
    \mathcal{R}(\bm{S}^{(k)}(t_1), \bm{S}^{\mathrm{ref}})
        \right],
    \label{eq:timestep_score}
\end{equation}
where
\vspace{-8pt}
\begin{equation}
    \bm{S}^{(k)}(t_1)=\mathcal{T}_{\theta, t_1: 0}(\bm{S}_{t_1}^{{\mathrm{ref}}},\{(\bm{\epsilon}_t^{(k)}, \bm{g}_t^{(k)})\}_{t=1}^{t_1}), 
    \quad\{(\bm{\epsilon}_t^{(k)}, \bm{g}_t^{(k)})\} \sim p_{\phi}(\cdot \mid t_1), 
\end{equation}
and $\mathcal{R}()$ is the reward function (Eq.~\ref{eqn:reward}). 
Given that the recovered reference trajectory already provides strong preservation of the reference 3D shape,
we prioritize reducing 2D similarity when identifying the optimal $t_1$.
Thus, we select the optimal $t_1$ as follows, 
\begin{equation}
    t_1^\ast(\bm{S}^{\mathrm{ref}})=\arg \max _{t_1 \in\{{\tau_n}\}} J(t_1\mid \bm{S}^{\mathrm{ref}})
    \text{, and thus, }{t_2^\ast = \max(t_1^\ast - L, 0)}, 
\end{equation}
where $L$ is a hyperparameter determining the length of the perturbation segment.
That is, 
the optimal $t_1^\ast$ ($\bm{S}^{\mathrm{ref}}$ is omitted for simplicity) identifies a segment that produces promising scaffolds under $p_\phi$,  
providing a favorable starting point for subsequent optimization.

\subsubsection{Value-Guided Scaffold Sampling}        
\label{sec:value_estimate}
With the optimal perturbation segment starting from $t_1^\ast$ , \method estimates the value of each 
intermediate scaffold state $\bm{S}_t$ ($t_1^\ast>t>t_2^\ast$) 
sampled under $p_{\phi}(\cdot\mid t_1^\ast)$ 
(Eq.~{\ref{eq:noise_perturb}})
to guide state selection within the segment, as illustrated in Figure~\ref{fig:overview}d.
\method reformulates the value estimator in Eq.~\ref{eq:value_est_lidar} to adapt it to a new reference-anchored sampling, 
using 
$\bm{S}^\mathrm{ref}_{t_1^\ast}$ and samples from the segment endpoint $t_2^\ast$. 

\begin{theorem}
Given a selected perturbation starting timestep $t_1^\ast$ and the corresponding reference state
$\bm{S}_{t_1^\ast}^{\mathrm{ref}}$,
the value of any intermediate state $\bm{S}_t$, where $t_1^\ast>t>t_2^\ast$, can be expressed as
\begin{equation}
V_{\phi}(
\bm{S}_t
\mid
\bm{S}_{t_1^\ast}^{\mathrm{ref}}
)
=
\frac{
\mathbb{E}_{
p_{\theta,\phi}(
\bm{S}_{t_2^\ast}
\mid
\bm{S}_{t_1^\ast}^{\mathrm{ref}}
)
}
\left[
w(
\bm{S}_{t_2^\ast};
\bm{S}_t,
\bm{S}_{t_1^\ast}^{\mathrm{ref}}
)
\mathcal{R}(
\mathcal{T}_{\theta,t_2^\ast:0}^{\mathrm{ref}}
(\bm{S}_{t_2^\ast}),
\bm{S}^{\mathrm{ref}}
)
\right]
}{
\mathbb{E}_{
p_{\theta,\phi}(
\bm{S}_{t_2^\ast}
\mid
\bm{S}_{t_1^\ast}^{\mathrm{ref}}
)
}
\left[
w(
\bm{S}_{t_2^\ast};
\bm{S}_t,
\bm{S}_{t_1^\ast}^{\mathrm{ref}}
)
\right]
},
\label{eq:value_estimator}
\end{equation}
where
\vspace{-10pt}
\begin{equation}
w(
\bm{S}_{t_2^\ast};
\bm{S}_t,
\bm{S}_{t_1^\ast}^{\mathrm{ref}}
)
=
q(
\bm{S}_t
\mid
\bm{S}_{t_2^\ast}
)
/q(
\bm{S}_{t_1^\ast}^{\mathrm{ref}}
\mid
\bm{S}_{t_2^\ast}
),
\end{equation}
$\mathcal{R}(\cdot, \bm{S}^{\mathrm{ref}})$ is the reward function 
(Eq.~\ref{eqn:reward}),
and {$p_{\theta,\phi}(\bm{S}_{t_2^\ast}\mid \bm{S}_{t_1^\ast}^{\mathrm{ref}})$} (conditioning on $\bm{P}$ and $\bm{F}^{\mathrm{ref}}$ is omitted for simplicity)
denotes the transition distribution from timestep $t_1^\ast$ to $t_2^\ast$ under $p_\phi(\cdot\mid t_1^\ast)$ and diffusion parameters $\theta$.
\end{theorem}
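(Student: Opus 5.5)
The plan is to mirror the derivation behind Eq.~\ref{eq:value_est_lidar}, with two changes. The role of the prior $q(\bm{S}_T)$ is taken by the fixed anchor $\bm{S}_{t_1^\ast}^{\mathrm{ref}}$, and the role of the clean sample $\bm{S}_0$ is taken by the segment endpoint $\bm{S}_{t_2^\ast}$.

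\textbf{Step 1: reduce the value to an expectation over $\bm{S}_{t_2^\ast}$.} I would define the value under the edited process as
\[
V_\phi(\bm{S}_t\mid \bm{S}_{t_1^\ast}^{\mathrm{ref}})=\mathbb{E}\bigl[\mathcal{R}(\bm{S}_0,\bm{S}^{\mathrm{ref}})\mid \bm{S}_t,\bm{S}_{t_1^\ast}^{\mathrm{ref}}\bigr],
\]
where $\bm{S}_0$ is generated by $p_\phi(\cdot\mid t_1^\ast)$ composed with $\theta$. Outside $(t_2^\ast,t_1^\ast]$, the perturbation distribution in Eq.~\ref{eq:noise_perturb} is a Dirac mass at the reference noise. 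Hence the map from $\bm{S}_{t_2^\ast}$ to the final state is deterministic: $\bm{S}_0=\mathcal{T}^{\mathrm{ref}}_{\theta,t_2^\ast:0}(\bm{S}_{t_2^\ast})$. By the tower property,
\[
V_\phi(\bm{S}_t\mid \bm{S}_{t_1^\ast}^{\mathrm{ref}})=\mathbb{E}_{p(\bm{S}_{t_2^\ast}\mid \bm{S}_t,\bm{S}_{t_1^\ast}^{\mathrm{ref}})}\bigl[\mathcal{R}(\mathcal{T}^{\mathrm{ref}}_{\theta,t_2^\ast:0}(\bm{S}_{t_2^\ast}),\bm{S}^{\mathrm{ref}})\bigr].
\]

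\textbf{Step 2: apply Bayes' rule to that conditional.} I would write
\[
p(\bm{S}_{t_2^\ast}\mid \bm{S}_t,\bm{S}_{t_1^\ast}^{\mathrm{ref}})=\frac{p_{\theta,\phi}(\bm{S}_{t_2^\ast}\mid \bm{S}_{t_1^\ast}^{\mathrm{ref}})\,p(\bm{S}_t\mid \bm{S}_{t_2^\ast},\bm{S}_{t_1^\ast}^{\mathrm{ref}})}{\mathbb{E}_{p_{\theta,\phi}(\bm{S}'_{t_2^\ast}\mid \bm{S}_{t_1^\ast}^{\mathrm{ref}})}\bigl[p(\bm{S}_t\mid \bm{S}'_{t_2^\ast},\bm{S}_{t_1^\ast}^{\mathrm{ref}})\bigr]}.
\]
Inside the segment the noise is fresh $\mathcal{N}(\bm{0},\bm{I})$ and $\operatorname{Gumbel}(0,1)$ draws, so the segment is ordinary ancestral DDPM/categorical sampling of $p_\theta$. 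I would therefore invoke the same modeling assumption used for Eq.~\ref{eq:value_est_lidar}. That assumption says the reverse model is consistent with the forward Markov chain, so the intermediate-state bridge equals the forward posterior:
\[
p(\bm{S}_t\mid \bm{S}_{t_2^\ast},\bm{S}_{t_1^\ast}^{\mathrm{ref}})=q(\bm{S}_t\mid \bm{S}_{t_2^\ast},\bm{S}_{t_1^\ast}^{\mathrm{ref}}).
\]

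\textbf{Step 3: expand the forward posterior and cancel.} By the Markov property of the forward process $t_2^\ast\to t\to t_1^\ast$,
\[
q(\bm{S}_t\mid \bm{S}_{t_2^\ast},\bm{S}_{t_1^\ast}^{\mathrm{ref}})=\frac{q(\bm{S}_t\mid \bm{S}_{t_2^\ast})\,q(\bm{S}_{t_1^\ast}^{\mathrm{ref}}\mid \bm{S}_t)}{q(\bm{S}_{t_1^\ast}^{\mathrm{ref}}\mid \bm{S}_{t_2^\ast})}
= w(\bm{S}_{t_2^\ast};\bm{S}_t,\bm{S}_{t_1^\ast}^{\mathrm{ref}})\,q(\bm{S}_{t_1^\ast}^{\mathrm{ref}}\mid \bm{S}_t).
\]
The factor $q(\bm{S}_{t_1^\ast}^{\mathrm{ref}}\mid \bm{S}_t)$ does not depend on $\bm{S}_{t_2^\ast}$. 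It can be pulled out of both the numerator and the normalizing expectation, where it cancels, and what remains is exactly Eq.~\ref{eq:value_estimator}. For the mixed state I would do this factor by factor: Gaussian kernels for positions, categorical kernels $q(s_t\mid s_{t-1})$ composed across steps for types, and product over atoms. The argument only uses the Markov structure, so it goes through unchanged.

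\textbf{Main obstacle.} The delicate step is Step 2, the identification of the model's bridge with the forward posterior. It is exact only under the assumption (also implicit in Eq.~\ref{eq:value_est_lidar}) that $p_\theta$ reproduces the forward marginals and posteriors. I would state this explicitly as the standing assumption, as in the cited lookahead value work. I would also note that conditioning on $\bm{S}_{t_1^\ast}^{\mathrm{ref}}$, rather than on a prior draw, is what produces the denominator $q(\bm{S}_{t_1^\ast}^{\mathrm{ref}}\mid \bm{S}_{t_2^\ast})$ in $w$.
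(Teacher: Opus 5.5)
Your proposal is correct and follows the paper's proof essentially step by step: the paper also conditions through $\bm{S}_{t_2^\ast}$, collapses the tail to the Dirac mass at $\mathcal{T}^{\mathrm{ref}}_{\theta,t_2^\ast:0}(\bm{S}_{t_2^\ast})$, applies Bayes' rule with $p_{\theta,\phi}(\bm{S}_{t_2^\ast}\mid \bm{S}_{t_1^\ast}^{\mathrm{ref}})$ as the prior, replaces the model bridge by the forward posterior $q(\bm{S}_t\mid \bm{S}_{t_2^\ast},\bm{S}_{t_1^\ast}^{\mathrm{ref}})$, and cancels $q(\bm{S}_{t_1^\ast}^{\mathrm{ref}}\mid \bm{S}_t)$ after the Markov factorization. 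The only difference is that you state explicitly the modeling assumption behind the bridge replacement (that the reverse model matches the forward process), whereas the paper's proof makes that substitution without comment.
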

The conditioning on $\bm{P}$, $\bm{F}^{\mathrm{ref}}$ and $\bm{S}^{\mathrm{ref}}$ is also omitted in $V_\phi(\bm{S}_t\mid \bm{S}_{t_1^\ast}^{\mathrm{ref}}
)$ to align with $p_{\theta,\phi}(\bm{S}_{t_2^\ast}\mid \bm{S}_{t_1^\ast}^{\mathrm{ref}})$.
We provide the proof in Appendix~\ref{appx:theorem1}. 
In practice, we approximate the distribution $p_{\theta,\phi}(\bm{S}_{t_2^\ast}\mid \bm{S}_{t_1^{*}}^{\mathrm{ref}})$ 
using samples obtained from it, which yields a Monte Carlo estimator
$\widehat V_{\phi}(
    \bm{S}_t\mid \bm{S}_{t_1^\ast}^{\mathrm{ref}}
    )$, 
described in Appendix~\ref{appx:theorem2-0}.

\paragraph{Value-Guided Transition}
\label{sec:guided_sampling}

Within the perturbation segment between $t_1^\ast$ and $t_2^\ast$, \method performs 
value-guided greedy search for optimal scaffold state candidates. 
Specifically, 
at selected timesteps,
given the current state $\bm{S}_t$ {($t_1^\ast > t-1 > t_2^\ast$)}, 
\method samples noises 
from $p_{\phi}(\cdot\mid t_1^\ast)$ (Eq.~\ref{eq:noise_perturb}), and identifies the optimal 
noise based on the estimated value (Eq.~\ref{eq:value_est_mc}) as follows,
\begin{equation}
(\bm{\epsilon}_t^\ast,\bm{g}_t^\ast)
=
\arg\max_{(\bm{\epsilon}_t,\bm{g}_t)\sim p_{\phi}(\cdot\mid t_1^\ast)}
\widehat V_{\phi}(
\mathcal{T}_{\theta,t:t-1}
(
\bm{S}_t,
(\bm{\epsilon}_t,\bm{g}_t)
\mid \bm{S}_{t_1^\ast}^{\mathrm{ref}}
).
\label{eq:noise_opt}
\end{equation}
and thus,
the optimal next scaffold state as 
\vspace{-8pt}
\begin{equation}
\bm{S}_{t-1}^\ast = \mathcal{T}_{\theta,t:t-1}(\bm{S}_t,(\bm{\epsilon}_t^\ast,\bm{g}_t^\ast)).
\end{equation}
By iteratively selecting the 
optimal
scaffold states along the edited noise trajectories,
\method provides a new transition distribution to address scaffold hopping (Eq.~{\ref{single_hop}}).

\section{Experiments}

\subsection{Experimental Setting}
\label{sec: exp}

\paragraph{Datasets}

We adapt the SBDD models using 
the training sets of the original models, which are all derived from CrossDocked2020 training data
~\citep{francoeur_crossdock}.
We evaluate {\method} on the test set introduced by {\citet{gao_conditardev}}, excluding pocket-ligand 
complexes for which either no or all ligand atoms are identified 
as functional-group atoms, resulting in 60 complexes with 58 unique ligands.
We use this test set because it focuses exclusively on human protein targets related to major diseases, avoiding the substantial proportion (67\%) of non-human targets in the CrossDocked2020 test set
and providing a more human-disease-relevant evaluation setting.
This test set has no overlap with the CrossDocked2020 training set.
The identification of functional-group atoms follows the protocol introduced by~\citet{guan_decompdiff}.
Across the 60 test complexes, 
ligands have an average of 28.7 heavy atoms, including 13.8 scaffold atoms and 14.9 functional-group atoms.

\paragraph{Baselines}

We evaluate three types of baselines on scaffold hopping: 
\textbf{(1)} \conditar\citep{gao_conditardev} and \ipdiff\citep{huang_ipdiff}, adapted to scaffold hopping (Appendix~\ref{app:finetune_SBDD}), 
referred to as \ftconditar and \ftipdiff, respectively. 
\textbf{(2)} \diffhopp~\citep{torge_diffhopp};  and 
\textbf{(3)} \shepherd~\citep{adams_shepherd},
due to their strong generation performance and the availability of public checkpoints for adaptation.
These three models, \ftconditar, \ftipdiff, and \diffhopp, serve two roles in our experiments: 
they are evaluated directly as scaffold hopping baselines, 
and they are also used as the base models on which \method performs inference-time 
editing.
Details on datasets are available in Appendix~\ref{appx:settings:baselines}.

\paragraph{Evaluation Metrics}

We evaluate the generated scaffolds from two perspectives: 
\textbf{(1)} similarities with respect to the reference ligands (\simtwod, \simthreed), and 
\textbf{(2)} general physicochemical 
properties for drug development (Vina S/M/D, QED, SA, Conn.(\%)).
These metrics are described in detail in Appendix~\ref{appx:settings:metrics}.

\paragraph{Iterative Hopping}

To explore more structurally distinct scaffolds, 
we use a generated scaffold $\bm{S}^{\mathrm{gen}}$ 
from \method as a new ``reference" ligand and apply \method over $\bm{S}^{\mathrm{gen}}$
for a second ``hop."
The details on $\bm{S}^{\mathrm{gen}}$ selection are provided in Appendix~\ref{appx:molecule_selection}.
We denote \method with two hops as \methodite, and the original \method with one hop as \methodonehop. 
Note that in \methodite, the reward (Eq.~\ref{eqn:reward}) in the second hop 
is evaluated against the original reference $\bm{S}^{\mathrm{ref}}$.
 
\paragraph{Experimental Protocol}

In \method, we use
the same hyperparameters as the corresponding base models.
Because \method performs atom-level trajectory recovery of the reference scaffold, 
the edited scaffold is constrained to have the same number of atoms as the reference.
To ensure fair comparison,
we use the same scaffold size for all baselines.
A potential pitfall of encouraging greater 2D structural novelty is that the generated molecules may deviate from the favorable chemical characteristics of the reference, leading to less desirable drug-like properties.
Therefore,
we introduce a second setting for \methodonehop and \methodite where \simthreed is replaced with QED in the reward function (Eq.~\ref{eqn:reward}).
Results under this reward function are indicated by ``(QED)".
The implementation details are provided in~\ref{appx:implementation}.

\subsection{Experimental Results}
\label{sec:exp:results}

\subsubsection{Overall Comparison}
\label{sec:exp:results:overall}

\paragraph{Comparison of \method with Baselines}

\begin{table}[htbp]
    \centering
    \caption{
    Comparison of \method with baselines on the test set with $L=100$.
    Best and second-best results are shown in \textbf{bold} and \underline{underlined}, respectively.
                    }
    \label{tab:overall_comparison}

	\begin{small}
    \begin{threeparttable}
    \setlength{\tabcolsep}{4pt}

    \begin{tabular*}{\textwidth}{
        @{\extracolsep{\fill}}
        p{1.35cm}
        l
        r
        r
        r
        r
        r
        r
        r
        r
    }
    \toprule
    &
    \multirow{2}{*}{Model}
    & \multicolumn{2}{c}{Similarity}
    & \multicolumn{3}{c}{Vina}
    & \multirow{2}{*}{QED $\uparrow$}
    & \multirow{2}{*}{SA $\uparrow$}
    & \multirow{2}{*}{\shortstack[c]{Conn.\\(\%) $\uparrow$}} \\
    \cmidrule(lr){3-4}
    \cmidrule(lr){5-7}
    &
    &
    \simtwod $\downarrow$
    & \simthreed $\uparrow$
    & Vina S $\downarrow$
    & Vina M $\downarrow$
    & Vina D $\downarrow$
    &
    &
    \\
    \midrule

    \multirow{4}{*}{\shortstack[c]{Baselines}}
    & \ftconditar
    & $0.398$ & $0.795$
    & $-7.192$ & $-7.753$ & $-8.564$
    & $0.455$ & $0.622$ & $85.2$ \\

    & \ftipdiff
    & $0.420$ & $0.856$
    & $\mathbf{-7.706}$ & $\mathbf{-8.176}$ & $\mathbf{-8.897}$
    & $0.456$ & $0.619$ & $94.7$ \\

    & \diffhopp
    & $0.426$ & $0.812$
    & $-2.223$ & $-6.209$ & $-8.642$
    & $\mathbf{0.535}$ & $0.667$ & $89.5$ \\

    & \shepherd
    & $0.417$ & $0.881$
    & $-6.471$ & $-7.402$ & $-8.390$
    & $0.404$ & $0.576$ & $29.4$ \\

    \midrule

    \multirow{5}{*}{\methodonehop}
    & \ftconditar
    & $0.354$ & $0.884$
    & $-7.508$ & $-8.049$ & $\underline{-8.833}$
    & $0.427$ & $0.601$ & $91.5$ \\

    & \ftconditarqed
    & $0.354$ & $0.859$
    & $\underline{-7.636}$ & $\underline{-8.163}$ & $-8.828$
    & $0.457$ & $0.610$ & $95.8$ \\

    & \ftipdiff
    & $0.363$ & $\mathbf{0.903}$
    & $-7.422$ & $-7.975$ & $-8.688$
    & $0.408$ & $0.584$ & $93.1$ \\

    & \ftipdiffqed
    & $0.356$ & $\underline{0.897}$
    & $-7.368$ & $-7.975$ & $-8.683$
    & $0.437$ & $0.581$ & $93.8$ \\

    & \diffhopp
    & $0.395$ & $0.867$
    & $-4.593$ & $-6.983$ & $-8.820$
    & $\underline{0.511}$ & $\underline{0.696}$ & $\underline{96.4}$ \\

        \cmidrule{2-10}
    
    \multirow{5}{*}{\methodite}
    & \ftconditar
    & $\underline{0.325}$ & $0.893$
    & $-7.405$ & $-7.988$ & $-8.783$
    & $0.426$ & $0.607$ & $93.7$ \\

    & \ftconditarqed
    & $0.334$ & $0.844$
    & $-7.488$ & $-8.064$ & $-8.801$
    & $0.470$ & $0.604$ & $93.0$ \\

    & \ftipdiff
    & $\mathbf{0.322}$ & $0.892$
    & $-7.088$ & $-7.604$ & $-8.477$
    & $0.403$ & $0.587$ & $95.3$ \\

    & \ftipdiffqed
    & $0.333$ & $0.874$
    & $-7.046$ & $-7.648$ & $-8.550$
    & $0.471$ & $0.595$ & $94.4$ \\

    & \diffhopp
    & $0.372$ & $0.863$
    & $-4.759$ & $-7.109$ & $-8.802$
    & $0.498$ & $\mathbf{0.708}$ & $\mathbf{98.2}$ \\

    \bottomrule
    \end{tabular*}
    \end{threeparttable}
    \end{small}
\end{table}

Table~\ref{tab:overall_comparison} presents a comprehensive comparison between  
\method and all the baselines. 
Overall, \methodonehop
consistently achieves lower \simtwod and higher \simthreed than those 
from corresponding base models (as baselines). 
In terms of  
improvement of \method with a base model (e.g., \methodonehop with \ftconditarqed) 
over its corresponding base model (e.g., \ftconditar), 
\methodonehop achieves 
$11.7\%$ decrease in \simtwod  
and $7.3\%$ increase in \simthreed on average across all base models 
(\shepherd is excluded in this calculation). 
Compared to the baselines, the fundamental difference of \method is in its value-guided editing 
over the reference noise trajectories in the base models -- 
the reference noise trajectories help preserve the reference 3D shape during editing, 
whereas value-guided search directs the generation toward low \simtwod. 
The superior performance of \methodonehop demonstrates that such editing is able to effectively 
steer the scaffold generation toward the desired chemical subspace beyond 
what is explored by the baselines. 
Meanwhile, on Vina S, \methodonehop achieves an average  
improvement $39.1\%$ over \ftconditar and \diffhopp, 
while maintaining 
similar Vina M and Vina D scores compared to those of all the base models. 
In terms of \connectivity, \methodonehop also introduces a noticeable 
average improvement $5.0\%$
over the base models. 
Although \methodonehop exhibits slight degradation in QED and SA scores, 
the resulting values remain within acceptable ranges.
Such slight degradation also reflects \methodonehop's exploration of novel chemical structures essential for 
scaffold hopping.

\paragraph{Base model-specific \method performance}

With \simtwod and \simthreed in the optimization objectives (Eqs.~\ref{single_hop} and~\ref{eqn:reward}), 
\methodonehop exhibits different improvement patterns across the two adapted SBDD models, 
\ftconditar and \ftipdiff.
On \ftconditar, \methodonehop improves both \simtwod and \simthreed substantially, reducing \simtwod by $11.1\%$ while increasing \simthreed by $11.2\%$.
In contrast, on \ftipdiff, which starts with
worse
\simtwod and better
\simthreed
as a baseline compared to \ftconditar,
\methodonehop achieves a larger gain in \simtwod ($13.6\%$) but a more moderate improvement in \simthreed ($5.5\%$). 
These results suggest that \methodonehop can adapt to different SBDD base models by adjusting its optimization focus
based on base models's
generation behaviors.
On \diffhopp, 
\methodonehop also consistently improves both \simtwod and \simthreed,
with improvements of $7.3\%$ and $6.8\%$, respectively. 
Note that \diffhopp has a much worse Vina S ($-2.223$) than the other baselines, 
and therefore, the noise trajectory of the reference ligand, which has high binding affinity and thus 
good Vina S, could be out of distribution for \diffhopp.
The fact that \methodonehop substantially improves Vina S to $-4.593$ over \diffhopp suggests that 
its in-distribution perturbation and value-guided editing could partially compensate for the distribution 
mismatch. 
Overall, these improvements across different base models demonstrate that \methodonehop can serve as a plug-and-play inference-time editing framework that generalizes across different molecular diffusion models.

In addition to the three structure-based models, we compare \methodonehop with the ligand-based model \shepherd, which is specifically trained to generate molecules with shapes similar to the reference.
\methodonehop can achieve \simthreed even higher than that of \shepherd,
demonstrating its strong capability of retaining preferred 3D shapes.

\paragraph{Comparison of Different Reward Functions in \method}

We further investigate whether \method remains effective under various reward functions.
Under \methodonehop, with a reward function combining \simtwod and QED of generated molecules, \ftconditarqed and \ftipdiffqed achieve $7.0\%$ and $7.1\%$ higher QED than \ftconditar and \ftipdiff, respectively. 
The corresponding gains increase to $10.3\%$ and $16.9\%$ under \methodite.
This suggests that \method can accommodate different reward functions, 
and could be extended to other molecular properties through appropriate reward design.
Note that although not explicitly optimized, \simthreed after \simtwod and QED optimization 
in \methodonehop still remains higher than that of the corresponding baseline, with improvements of $8.1\%$ and $4.8\%$ for \ftconditarqed and \ftipdiffqed, respectively. 
This suggests that the reference noise trajectory helps retain favorable 3D shape even when \simthreed is not explicitly included in the reward.

\paragraph{Comparison of \methodonehop and \methodite}

Table~\ref{tab:overall_comparison} also shows the performance changes between \methodonehop and \methodite.
Compared with \methodonehop, \methodite generally further reduces \simtwod while maintaining high \simthreed.
For example, \simtwod decreases from $0.354$ to $0.325$ on \ftconditar and from $0.363$ to $0.322$ on \ftipdiff, while \simthreed remains above $0.890$ for both models.
However, this increased 2D diversification is accompanied by weaker binding affinity: when averaged over \ftconditar and \ftipdiff, with and without QED optimization, Vina S changes from $-7.484$ under \methodonehop to $-7.257$ under \methodite.
This might be due to the fact that the second hop uses a generated molecule as its anchor.
This may cause deviations to accumulate across hops in properties not constrained by the similarity profile,
potentially degrading binding affinity.
Overall, these results highlight a trade-off in \method, 
and more generally, in
scaffold hopping,
between progressively improving the similarity profile and preserving binding affinity over multiple hops.

\subsubsection{Comparison of Random Perturbation and Value-Guided Sampling}
\label{sec:exp:results:search}

\begin{table}[htbp]
\centering
\caption{
Comparison of random perturbation and value-guided sampling on $L=T/10$ and $L=T$ (perturbation segment starting 
from $t_1^\ast$ to the end of the diffusion trajectory), using \ftconditar as the base model.
Best and second-best results are shown in \textbf{bold} and \underline{underlined}, respectively.
}
\label{tab:length_comparison}

\begin{small}
\begin{threeparttable}
\setlength{\tabcolsep}{4pt}

\begin{tabular*}{\textwidth}{
    @{\extracolsep{\fill}}
    >{\centering\arraybackslash}p{1.6cm}
    c
    l
    r
    r
    r
    r
    r
    r
    r
    r
}
\toprule
\multirow{2}{*}{}
& \multirow{2}{*}{$L$}
& \multirow{2}{*}{}
& \multicolumn{2}{c}{Similarity}
& \multicolumn{3}{c}{Vina}
& \multirow{2}{*}{QED $\uparrow$}
& \multirow{2}{*}{SA $\uparrow$}
& \multirow{2}{*}{\shortstack[c]{Conn.\\(\%) $\uparrow$}} \\
\cmidrule(lr){4-5}
\cmidrule(lr){6-8}
&
&
&
\simtwod $\downarrow$
& \simthreed $\uparrow$
& Vina S $\downarrow$
& Vina M $\downarrow$
& Vina D $\downarrow$
&
&
\\
\midrule

\multirow{4}{*}{\shortstack[c]{random\\perturbation\\(Eq.~\ref{eq:noise_perturb})}}
& \multirow{2}{*}{$T/10$}
& \methodonehop
& $0.387$
& $\mathbf{0.898}$
& $-7.742$
& $-8.213$
& $-8.840$
& $0.442$
& $0.614$
& $86.6$ \\

&
& \methodite
&
$0.343$
& $0.874$
& $-7.581$
& $-8.091$
& $-8.813$
& $0.425$
& $0.604$
& $90.0$ \\

\cmidrule{3-11}

&
\multirow{2}{*}{$T$}
& \methodonehop
& $0.391$
& $0.866$
& $\mathbf{-7.837}$
& $\mathbf{-8.308}$
& $\underline{-9.028}$
& $\mathbf{0.462}$
& $\underline{0.629}$
& $92.3$ \\

&
& \methodite
&
$0.356$
& $0.854$
& $-7.763$
& $-8.232$
& $-8.915$
& $0.450$
& $0.620$
& $95.2$ \\

\cmidrule{1-11}

\multirow{4}{*}{\shortstack[c]{value-guided\\sampling\\(Eq.~\ref{eq:noise_opt})}}
& \multirow{2}{*}{$T/10$}
& \methodonehop
& $0.354$
& $0.884$
& $-7.508$
& $-8.049$
& $-8.833$
& $0.427$
& $0.601$
& $91.5$ \\

&
& \methodite
&
$\mathbf{0.325}$
& $\underline{0.893}$
& $-7.405$
& $-7.988$
& $-8.783$
& $0.426$
& $0.607$
& $93.7$ \\

\cmidrule{3-11}

&
\multirow{2}{*}{$T$}
& \methodonehop
& $0.340$
& $0.882$
& $-7.612$
& $-8.157$
& $-8.904$
& $0.440$
& $0.628$
& $95.1$ \\

&
& \methodite
&
$\underline{0.334}$
& $0.887$
& $-7.539$
& $-8.054$
& $-8.806$
& $0.432$
& $0.621$
& $93.8$ \\

\bottomrule
\end{tabular*}
\end{threeparttable}
\end{small}
\end{table}

Table~\ref{tab:length_comparison} compares results with random perturbation (Eq.~\ref{eq:noise_perturb}) and value-guided sampling (Eq.~\ref{eq:noise_opt}) applied in \method.
We choose \ftconditar for this study because \method applied to \ftconditar achieves strong performance and maintaining stable Vina S across hops.
Overall, pairwise comparisons across the 4 settings ($T/10$ vs $T$, \methodonehop vs 
\methodite) show that value-guided sampling reduces average \simtwod by $8.3\%$ and improves \simthreed by $1.6\%$ relative to random perturbation.
This suggests that the performance gains arise from guided search
toward desired properties rather than from perturbation-induced changes alone.

Table~\ref{tab:length_comparison} also compares the performance of \method under two perturbation segment lengths: a shorter segment covering one-tenth of the diffusion trajectory ($L=T/10$) and a longer segment extending from $t_1^\ast$ to the end of the diffusion trajectory (denoted as $L=T$).
We use $T/10$ 
to provide a moderate discretization granularity over the diffusion trajectory,
balancing exploration of distinct molecules and preservation of reference shapes.
With $L=T/10$, when averaged over \methodonehop and \methodite, value-guided sampling improves \simtwod by $6.9\%$ over random perturbation, with a minor improvement in \simthreed of $0.31\%$.
This trend becomes more significant with $L=T$, where value-guided sampling reduces \simtwod by $9.6\%$ and increases \simthreed by $2.9\%$.
The larger gains with $L=T$ may be explained by the longer perturbation segment, which preserves less information from the reference structure and provides greater room 
for value-guided sampling to adjust the similarity profile.

\subsubsection{Comparison of Reference-Optimal $t_1^\ast$ and Fixed $t_1$}
\label{sec:exp:results:t1}

\begin{figure}[h!]
\centering
\begin{minipage}[c]{0.30\textwidth}
    \centering
    \includegraphics[width=.82\linewidth]{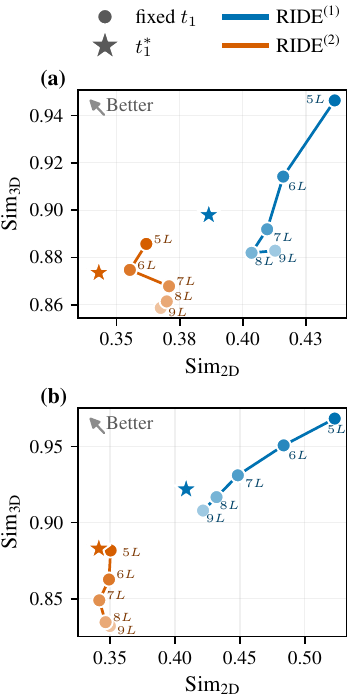}
    \caption{
    Comparison of Sim$_{\text{2D}}$ vs. Sim$_{\text{3D}}$ across $t_1 \in \{nL\}_{n=5}^{9}$ and  $t_1^\ast$ with $L=T/10$. 
	\textbf{(a)} \ftconditar; \textbf{(b)} \ftipdiff. 
    }
    \label{fig:setting_a}
\end{minipage}
\hfill
\begin{minipage}[c]{0.68\textwidth}
    \centering

    \includegraphics[width=.95\linewidth]{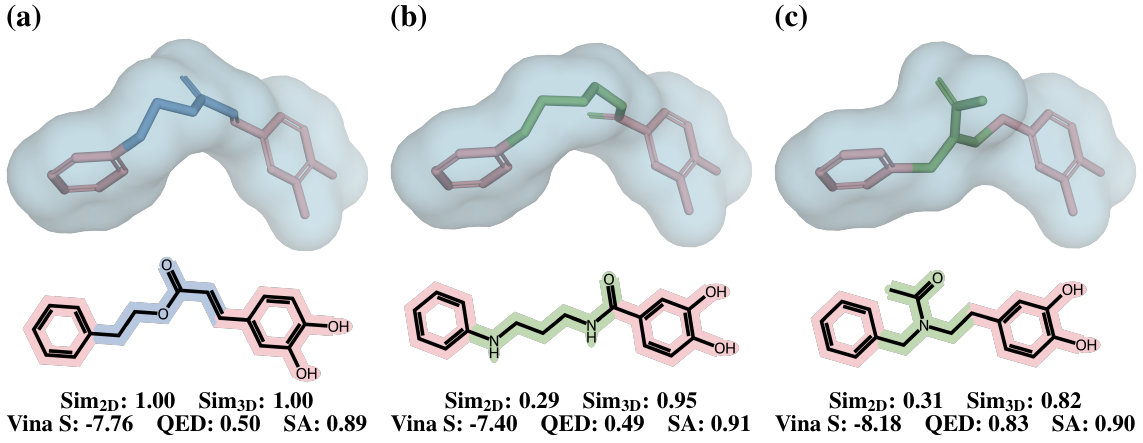}
    \captionof{figure}{
    Example of generated scaffolds for AK1BA (PDB ID: 5liu).    }
    \label{fig:case_5liu}
    \vspace{0.8em}
    \includegraphics[width=\linewidth]{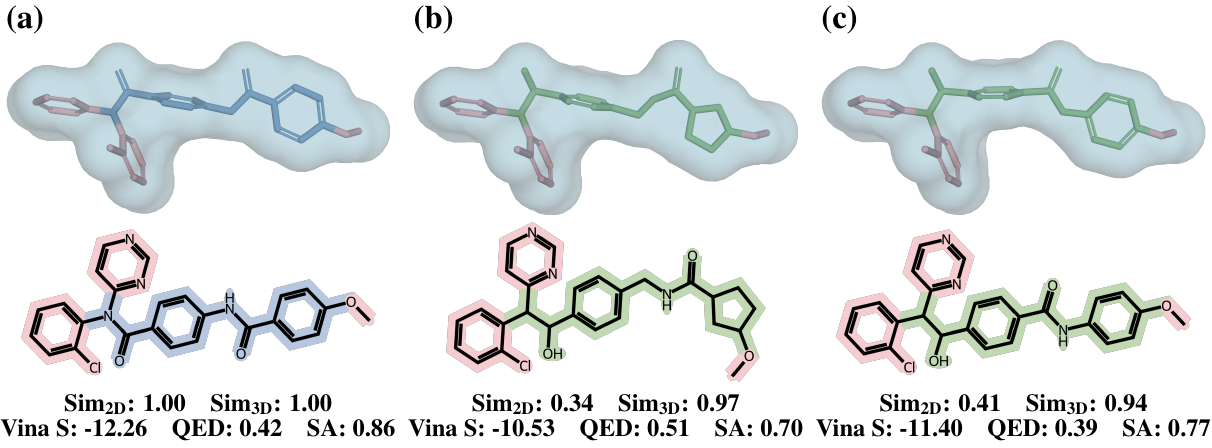}
    \captionof{figure}{
    Example of generated scaffolds for TNKS2 (PDB ID: 5aeh).    }
    \label{fig:case_5aeh}

\end{minipage}
\end{figure}

Figures~\ref{fig:setting_a}a and~\ref{fig:setting_a}b illustrate \simtwod and \simthreed across candidate $t_1$ values and at the reference-optimal $t_1^\ast$, using \ftconditar and \ftipdiff as the base model, respectively, under random perturbation (Eq.~\ref{eq:noise_perturb}).
On both hops,
compared with a fixed $t_1$,
the reference-optimal $t_1^\ast$ achieves the best \simtwod while with competitive \simthreed,
indicating that the reference-optimal $t_1^\ast$ identifies a strong starting point for subsequent value-guided sampling.
Meanwhile, \methodonehop and \methodite
exhibit different trends in \simtwod and \simthreed across candidate $t_1$ values.
With \methodonehop, a later perturbation improves \simthreed at the cost of degrading \simtwod, 
which is consistent with the intuition that later perturbation preserves more of the reference shapes.
A similar pattern in \simtwod and \simthreed
is observed with \methodite, but with lower \simtwod and \simthreed than those in 
\methodonehop and smaller variations across candidate $t_1$ values,
indicating that the \methodite
is less sensitive to the choice of perturbation segment.
This may be due to the fact that
the \methodite
is anchored to a generated molecule that is already structurally distant from the original reference. 
Therefore, further perturbations can induce only limited additional changes in the similarity profile with respect to the reference molecule.

\vspace{-4pt}
\subsubsection{Case Studies}
\label{sec:exp:results:case}
\vspace{-3pt}
 
Figures~\ref{fig:case_5liu}b and~\ref{fig:case_5liu}c show two generated scaffolds by \method 
for AK1BA, a protein target involved in gastrointestinal cancers~\citep{wang2026pathway}.
Both generated scaffolds have low \simtwod and high \simthreed with the reference scaffold, as in Figure~\ref{fig:case_5liu}a.
However, these scaffolds differ greatly in flexibility.
The scaffold in Figure~\ref{fig:case_5liu}b is more flexible, with 6 rotatable bonds in the linker 
compared to 5 in that of the reference scaffold.
Flexible molecules can better adapt to protein conformation shifts or resistance mutations in cancer~\citep{fang2014conformational}. 
However, such flexibility often incurs an entropic penalty to the binding energies; this aligns with the slightly lower Vina S observed here. In contrast, the scaffold in Figure~\ref{fig:case_5liu}c has
a shorter, more rigid linker
than the reference,
with only 4 rotatable linker bonds.
More rigid molecules
tend to have lower entropic binding penalties and higher selectivity, 
improving effectiveness while reducing
off-target effects~\citep{fang2014conformational},
Together, these examples show \method can generate scaffolds with different 
structures and properties.

Figures~\ref{fig:case_5aeh} presents another example from \method for 
TNKS2,
a target implicated in many cancers~\citep{huang2009tankyrase}.
Both generated scaffolds (Figures~\ref{fig:case_5aeh}b-c)
have moderately low \simtwod and very high \simthreed with the reference scaffold in 
Figure~\ref{fig:case_5aeh}a.
The generated scaffolds share some
common deviations from the reference: both use carbon to bind the two cyclic functional groups
and replace one ketone with an alcohol.
These changes alter physicochemical properties (e.g., pKa, lipophilicity) that influence absorption, distribution, metabolism, excretion, and toxicity properties of molecules.
The scaffold in Figure~\ref{fig:case_5aeh}b also uses cyclopentane in place of 
one benzene and adds
an additional rotatable bond.
These changes generally increase flexibility. 
Meanwhile, the generated scaffolds retain some key reference scaffold features -- 
a benzene ring connected to another ring by a linker with a ketone.
This helps the generated scaffolds attain very high \simthreed,
while still introducing substantial 2D variation via atom-type, bond-type, ring-size, and linker changes.

\section{Conclusion}

\method is a novel inference-time diffusion editing framework for scaffold hopping.
Our experimental results demonstrate that \method is effective, 
consistently generating scaffolds with low 2D
similarity and high 3D similarity to the reference scaffold.
Furthermore, \method has demonstrated its effectiveness under a variety of different settings,
including different base models, reward functions, perturbation segment lengths, and number of scaffold hopping iterations.
These results motivate future work to extend
the framework to optimize additional molecular properties and further improve generated molecule quality.
Future work may also investigate the development of more adaptive perturbation strategies for even more effective optimization.


\section*{Acknowledgments}

This project was made possible, in part, by support from the National Science Foundation
grant no. IIS-2435819 (X.N.), the National Library of Medicine grant no. 1R01LM014385 (X.N.), and the Sanofi iDEA-TECH Awards North America (X.N.).
Any opinions, findings, conclusions or recommendations expressed in this
paper are those of the authors and do not necessarily reflect the views of the funding agency.
The authors would like to thank Benjamin Burns and Ye Liu for their proofreading of the manuscript.
The authors would also like to thank Daniel Adu-ampratwum for a helpful discussion regarding case study molecules.

\section*{AI Use Statement}
During the preparation of this work, the authors used AI tools to assist with language editing, improving the clarity of the manuscript, and identifying relevant literature.
All AI-assisted content was reviewed and verified by the authors.

\section*{Ethics Statement}

\method is a novel inference-time diffusion editing framework for scaffold hopping, which is an important problem in chemistry.
While we have not designed \method for any unsafe  or harmful purpose,
we acknowledge that not
all generated scaffolds and molecules are safe,
and that \method could generate harmful content.
Therefore, we
strongly encourage responsible human expert supervision for any use of \method or its generated scaffolds and molecules.
Specifically, human expert chemists should verify the safety of any generated scaffolds and molecules before any attempt is made to synthesize or test them in the laboratory.
Furthermore, we exhort
all users of \method to follow all applicable
safety regulations, ethical guidelines, laws, and professional best practices.

\section*{Reproducibility statement}
All the source code has been made available at \url{https://anonymous.4open.science/r/RIDE-C8A0}. All the datasets are public. 
We reported all the experimental protocal in Section~\ref{sec: exp},  
and all the hyperparameters in Appendix~\ref{appx:molecule_selection}.

\bibliography{iclr2027_conference}
\bibliographystyle{iclr2027_conference}

\newpage
\appendix
\section{Appendix}

\subsection{Adapting Diffusion-based SBDD Models to Scaffold Hopping}
\label{app:finetune_SBDD}

To adapt a pretrained diffusion-based SBDD model, 
denoted as $p_\theta((\bm{S},\bm{F})\mid \bm{P})$, 
to a base model $p_\theta(\bm{S}\mid \bm{P},\bm{F}^{\mathrm{ref}})$ for scaffold hopping, 
with $\bm{F}^{\mathrm{ref}}$ as additional conditioning, 
we fix $\bm{F}^{\mathrm{ref}}$ and restrict the diffusion loss to 
scaffold atoms and update the diffusion parameters as follows,
\begin{equation}
\min\nolimits_{{\theta}}\mathbb{E}_{t,\bm{M}_t}
\left[
\sum\nolimits_{i=1}^{n}
\mathbb{I}(\bm{a}_i\in \bm{S}_t)
\mathcal{L}_{\theta}^{i}
(\bm{M}_t;\bm{P},t)
\right],
\label{eq}
\end{equation}
where $\bm{M}_t=(\bm{S}_t,\bm{F}^{\mathrm{ref}})$ denotes the partially noised ligand with fixed functional groups at $t$,
$\mathbb{I}(\bm{a}_i\in \bm{S}_t)$ 
indicates whether the $i$-th atom belongs to the 
noised scaffold $\bm{S}_t$,
and $\mathcal{L}_{\mathrm{\theta}}^{i}$ denotes the SBDD diffusion loss on the $i$-th atom.

Although the additional conditioning can also
be imposed through inpainting 
during inference time of SBDD models,
inpainting can create a mismatch with the distribution $p_\theta((\bm{S},\bm{F})\mid \bm{P})$
learned during training,
potentially leading to invalid molecular structures and unfavorable conformations.
The distribution mismatch may also make the model more vulnerable to modifications
on the diffusion trajectory,
thereby reducing 
 control over the generated structures.
Compared to training a conditional generative model from scratch or inpainting,
adapting a pretrained SBDD model offers multiple benefits: 
Adaptation leverages the pretrained SBDD model's capacity as a strong prior for generating chemically valid molecules with favorable binding affinity and 3D shapes;  
it also optimizes the model parameters to accommodate the additional conditioning ($\bm{F}^\mathrm{ref}$).

\subsection{Algorithms}
\label{appx:algorithm}

\subsubsection{\method}

\begin{algorithm}[hthp]
\caption{\method 
}
\label{alg:method}
\begin{algorithmic}[1]

\Require Protein pocket $\bm{P}$, reference scaffold and functional groups $(\bm{S}^{\mathrm{ref}},\bm{F}^{\mathrm{ref}})$, perturbation length $L$, diffusion model $\theta$
\Ensure Generated scaffold $\bm{S}_0$

\Statex \textbf{Reference noise trajectory recovery}

\State $\{(\bm{X}_t^{\mathrm{ref}}, \bm{V}_t^{\mathrm{ref}})\}_{t=0}^{T}$,
$\{(\bm{\epsilon}_t^{\mathrm{ref}},\bm{g}_t^{\mathrm{ref}})\}_{t=1}^{T}
=$ \Call{Inversion}{$\bm{S}^{\mathrm{ref}}, \bm{P}, \bm{F}^{\mathrm{ref}}$}
\Comment{Details in Algorithm~\ref{alg:scaffold_inversion}}

\Statex
\vspace{-0.7em}
\Statex \textbf{Reference-optimal perturbation segment selection}

\State $t_1^\ast =$ \Call{SegmentSelection}{
$\{(\bm{X}_t^{\mathrm{ref}},\bm{V}_t^{\mathrm{ref}})\}$,
$\{(\bm{\epsilon}_t^{\mathrm{ref}},\bm{g}_t^{\mathrm{ref}})\},
\bm{P}, \bm{F}^{\mathrm{ref}}, L$}
\Comment{Details in Algorithm~\ref{alg:t1_selection}}

\State $t_2^\ast \gets \max(t_1^\ast-L, 0)$

\Statex
\vspace{-0.7em}
\Statex \textbf{Value-guided scaffold sampling}

\State $\bm{S}_{t_2^\ast} =$ \Call{ValueGuidedSampling}{
$\{(\bm{X}_t^{\mathrm{ref}},\bm{V}_t^{\mathrm{ref}})\}$,
$\{(\bm{\epsilon}_t^{\mathrm{ref}},\bm{g}_t^{\mathrm{ref}})\},
\bm{P}, \bm{F}^{\mathrm{ref}},
t_1^\ast,
t_2^\ast$}
\Comment{Details in Algorithm~\ref{alg:value}}

\State $\bm{S}_0
= \mathcal{T}_{\theta,t_2^\ast:0}^{\mathrm{ref}}
(\bm{S}_{t_2^\ast})$
\Comment{Reuse reference noise}

\State \Return $\bm{S}_0$

\end{algorithmic}
\end{algorithm}
\method is summarized in Algorithm~\ref{alg:method}. 
The following sections describe each component in detail.

\subsubsection{Reference Trajectory Inversion}
The inversion of reference scaffold is summarized in Algorithm~\ref{alg:scaffold_inversion}.

\begin{algorithm}[htbp]
\caption{Trajectory Inversion of $\bm{S}^\mathrm{ref}$}
\label{alg:scaffold_inversion}
\begin{algorithmic}[1]

\Function{Inversion}{$\bm{S}^{\mathrm{ref}}, \bm{P}, \bm{F}^{\mathrm{ref}}$}

\State $(\bm{X}_0^{\mathrm{ref}},\bm{V}_0^{\mathrm{ref}})
\gets (\bm{X}^{\mathrm{ref}},\bm{V}^{\mathrm{ref}})$

\For{$t=1,\ldots,T$}

    \State $\hat{\bm{\epsilon}}
    \gets
    \bm{\epsilon}_\theta
    (
    \bm{X}_{t-1}^{\mathrm{ref}},
    t-1;\bm{V}_{t-1}^{\mathrm{ref}},\bm{P},\bm{F}^{\mathrm{ref}})$

    \State $\displaystyle
    \bm{X}_t^{\mathrm{ref}}
    \gets
    \sqrt{\bar{\alpha}_t}
    (
    \frac{
    \bm{X}_{t-1}^{\mathrm{ref}}
    -
    \sqrt{1-\bar{\alpha}_{t-1}}\,
    \hat{\bm{\epsilon}}
    }{
    \sqrt{\bar{\alpha}_{t-1}}
    }
    )
    +
    \sqrt{1-\bar{\alpha}_t}\,
    \hat{\bm{\epsilon}}$

    \State $\displaystyle
    \bm{V}_{t}^{\mathrm{ref}}
    \sim
    q\!\left(
    \bm{V}_{t}
    \mid
    \bm{V}_{t-1}^{\mathrm{ref}}
    \right)$

\EndFor

\For{$t=1,\ldots,T$}

    \State $\displaystyle
    \bm{\epsilon}_t^{\mathrm{ref}}
    \gets
    \frac{
    \bm{X}_{t-1}^{\mathrm{ref}}
    -
    \bm{\mu}_\theta
    (
    \bm{X}_t^{\mathrm{ref}},
    t;\bm{V}_t^{\mathrm{ref}},
    \bm{P},\bm{F}^{\mathrm{ref}}
    )
    }{\sigma_t}$

    \State $\hat{{\bm{p}}}_{t}
        \gets
        p_\theta(
        {\bm{V}}_{t-1}
        \mid
        \bm{X}_t^{\mathrm{ref}},
        \bm{V}_t^{\mathrm{ref}};
        \bm{P},
        \bm{F}^{\mathrm{ref}}
        )$

    \State $\displaystyle
    c_{i}^{\mathrm{ref}}
    \gets
    {V}_{t-1,i}^{\mathrm{ref}},
    \quad i=1,\ldots,n$
    
    \State $\displaystyle g_{t,i,k}^{\mathrm{ref}} \gets \begin{cases} \left[ \max\limits_{j\neq c_i^{\mathrm{ref}}} \log \hat{p}_{t,i,j} - \log \hat{p}_{t,i,c_i^{\mathrm{ref}}} + \delta \right]_+, & k=c_i^{\mathrm{ref}}, \\[2pt] 0, & k\neq c_i^{\mathrm{ref}}, \end{cases} $ \Statex $\hfill i=1,\ldots,n,\quad k=1,\ldots,d$

\EndFor

\State \Return
$\{(\bm{X}_t^{\mathrm{ref}},\bm{V}_t^{\mathrm{ref}})\}_{t=0}^{T}$,
$\{(\bm{\epsilon}_t^{\mathrm{ref}},\bm{g}_t^{\mathrm{ref}})\}_{t=1}^{T}$
\EndFunction
\end{algorithmic}
\end{algorithm}

\subsubsection{Optimal $t_1$ Selection}
The procedure for selecting $t_1^\ast$ is summarized in Algorithm~\ref{alg:t1_selection}.

\begin{algorithm}[htbp]
\caption{Selection of $t_1$}
\label{alg:t1_selection}
\begin{algorithmic}[1]

\Function{SegmentSelection}{$\{(\bm{X}_t^{\mathrm{ref}},\bm{V}_t^{\mathrm{ref}})\}$,
$\{(\bm{\epsilon}_t^{\mathrm{ref}},\bm{g}_t^{\mathrm{ref}})\},\bm{P}, \bm{F}^{\mathrm{ref}},L$}

\For{$t_1 \in \{\tau_n=\frac{nT}{N}: n=n_1,\ldots,n_2\}$}
    \For{$k=1,\ldots,K$}
        \State
        $\{(\bm{\epsilon}_t^{(k)},\bm{g}_t^{(k)})\}
        \sim
        p_{\phi}(\cdot\mid t_1)$

        \State
        $\bm{S}^{(k)}(t_1)
        \gets
        \mathcal{T}_{\theta,t_1:0}
        (
        \bm{S}_{t_1}^{\mathrm{ref}},
        \{(\bm{\epsilon}_t^{(k)},\bm{g}_t^{(k)})\}
        )$
    \EndFor

    \State
    $J(t_1\mid \bm{P}, \bm{F}^{\mathrm{ref}}, \bm{S}^{\mathrm{ref}}, L)
    \gets
    \frac{1}{K}\sum_{k=1}^{K}
    \Big[
    \eta(1-\operatorname{Sim}_{2\mathrm{D}}(\bm{S}^{(k)}(t_1),\bm{S}^{\mathrm{ref}}))
    +(1-\eta)\operatorname{Sim}_{3\mathrm{D}}(\bm{S}^{(k)}(t_1),\bm{S}^{\mathrm{ref}})
    \Big]$
\EndFor

\State
$t_1^\ast
\gets
\arg\max_{t_1}
J(t_1\mid \bm{P}, \bm{F}^{\mathrm{ref}}, \bm{S}^{\mathrm{ref}}, L)$

\State \Return $t_1^\ast$

\EndFunction
\end{algorithmic}
\end{algorithm}

\subsubsection{Value-guided sampling}
The procedure for value estimation and value-guided sampling is summarized in Algorithm~\ref{alg:value}.

\begin{algorithm}[htbp]
\caption{Value-Guided Sampling}
\label{alg:value}
\begin{algorithmic}[1]

\Function{ValueGuidedSampling}{$\{(\bm{X}_t^{\mathrm{ref}},\bm{V}_t^{\mathrm{ref}})\}$,
$\{(\bm{\epsilon}_t^{\mathrm{ref}},\bm{g}_t^{\mathrm{ref}})\},\bm{P}, \bm{F}^{\mathrm{ref}},
t_1^\ast,
t_2^\ast$}

\State $\bm{S}_{t_1^\ast}\gets \bm{S}_{t_1^\ast}^{\mathrm{ref}}$

\For{$m=1,\ldots,M$}
    \State $\bm{S}_{t_2^\ast}^{(m)}
    \sim p_\theta(
    \bm{S}_{t_2^\ast}
    \mid
    \bm{S}_{t_1^\ast}^{\mathrm{ref}}
    )$, \quad
    $R^{(m)}
    =
    \mathcal{R}(
    \mathcal{T}_{\theta,t_2^\ast:0}^{\mathrm{ref}}
    (\bm{S}_{t_2^\ast}^{(m)}),
    \bm{S}^{\mathrm{ref}}
    )$
    \Comment{Prepare lookahead samples}
\EndFor

\For{$t=t_1^\ast,\ldots,t_2^\ast+1$}
    \If{$t$ is a guided timestep}

        \For{$b=1,\ldots,B$}
            \State $(\bm{\epsilon}_t^{(b)},\bm{g}_t^{(b)})
            \sim p_{\phi}(\cdot\mid t_1^\ast)$, \quad
            $\bm{S}_{t-1}^{(b)}
            =
            \mathcal{T}_{\theta,t:t-1}(
            \bm{S}_t,
            (\bm{\epsilon}_t^{(b)},\bm{g}_t^{(b)})
            )$
            \Comment{Propose a candidate for selection}

            \State $\widehat{V}^{(b)}
            =
            \widehat{V}_\phi(
            \bm{S}_{t-1}^{(b)}
            \mid
            \bm{S}_{t_1^\ast}^{\mathrm{ref}}
            )$
            \Comment{Estimate value using Eq.~\ref{eq:value_est_mc}}
        \EndFor

        \State $b^\ast=\arg\max_b\widehat V^{(b)}$, \quad
        $\bm{S}_{t-1}\gets\bm{S}_{t-1}^{(b^\ast)}$
        \Comment{Value-guided selection}

    \Else

        \State $(\bm{\epsilon}_t,\bm{g}_t)
        \sim p_{\phi}(\cdot\mid t_1^\ast)$, \quad
        $\bm{S}_{t-1}
        =
        \mathcal{T}_{\theta,t:t-1}(
        \bm{S}_t,
        (\bm{\epsilon}_t,\bm{g}_t)
        )$
        \Comment{Reverse transition without selection}

    \EndIf
\EndFor

\State \Return $\bm{S}_{t_2^\ast}$

\EndFunction

\end{algorithmic}
\end{algorithm}

\subsection{Additional theories}
\subsubsection{Proof for Theorem 1}
\label{appx:theorem1}
\begin{proof}
By definition, the value of an intermediate state $\bm{S}_t$ is the expected terminal reward obtained by continuing the reverse process from $\bm{S}_t$.

Then,
\begin{equation*}
\begin{aligned}
V_\phi(
\bm{S}_t
\mid
\bm{S}_{t_1^\ast}^{\mathrm{ref}}
)
&= \mathbb{E}_{
p_{\theta,\phi}(
\bm{S}_{0}
\mid
\bm{S}_t,\bm{S}_{t_1^\ast}^{\mathrm{ref}}
)
}
\left[
\mathcal{R}(\bm{S}_{0},
\bm{S}^{\mathrm{ref}}
)
\right] \\
&= \int
\int
\mathcal{R}(
\bm{S}_0,
\bm{S}^{\mathrm{ref}}
)
p_{\theta,\phi}(
\bm{S}_0\mid \bm{S}_{t_2^\ast},\bm{S}_{t_1^\ast}^{\mathrm{ref}}
)
p_{\theta,\phi}(
\bm{S}_{t_2^\ast}\mid \bm{S}_t,\bm{S}_{t_1^\ast}^{\mathrm{ref}}
)
d\bm{S}_0d\bm{S}_{t_2^\ast} \\
&=
\int
\int
\mathcal{R}(
\bm{S}_0,
\bm{S}^{\mathrm{ref}}
)
\delta(
\bm{S}_0-
\mathcal{T}_{\theta,t_2^\ast:0}^{\mathrm{ref}}
(\bm{S}_{t_2^\ast})
)
p_{\theta,\phi}(
\bm{S}_{t_2^\ast}\mid \bm{S}_t,\bm{S}_{t_1^\ast}^{\mathrm{ref}}
)
d\bm{S}_0d\bm{S}_{t_2^\ast} \\
&= 
\int
\mathcal{R}(
\mathcal{T}_{\theta,t_2^\ast:0}^{\mathrm{ref}}
(\bm{S}_{t_2^\ast}),
\bm{S}^{\mathrm{ref}}
)
p_{\theta,\phi}(
\bm{S}_{t_2^\ast}\mid \bm{S}_t,\bm{S}_{t_1^\ast}^{\mathrm{ref}}
)
d\bm{S}_{t_2^\ast} \\
&= 
\int \mathcal{R}(\mathcal{T}_{\theta, t_2^\ast:0}^{\mathrm{ref}}(\bm{S}_{t_2^\ast}), \bm{S}^{\mathrm{ref}}) \frac{p_{\theta,\phi}(\bm{S}_{t_2^\ast}, \bm{S}_t \mid \bm{S}_{t_1^\ast}^{\mathrm{ref}})}{p_{\theta,\phi}(\bm{S}_t \mid \bm{S}_{t_1^\ast}^{\mathrm{ref}})} d \bm{S}_{t_2^\ast} \\
&=
\int \mathcal{R}(\mathcal{T}_{\theta, t_2^\ast:0}^{\mathrm{ref}}(\bm{S}_{t_2^\ast}), \bm{S}^{\mathrm{ref}}) \frac{p_{\theta,\phi}(\bm{S}_t \mid \bm{S}_{t_2^\ast}, \bm{S}_{t_1^\ast}^{\mathrm{ref}}) p_{\theta,\phi}(\bm{S}_{t_2^\ast} \mid \bm{S}_{t_1^\ast}^{\mathrm{ref}})}{p_{\theta,\phi}(\bm{S}_t \mid \bm{S}_{t_1^\ast}^{\mathrm{ref}})} d S_{t_2^\ast}  
\\
&=
\frac{\int \mathcal{R}(\mathcal{T}_{\theta, t_2^\ast:0}^{\mathrm{ref}}(\bm{S}_{t_2^\ast}), \bm{S}^{\mathrm{ref}}) p_{\theta,\phi}(\bm{S}_t \mid \bm{S}_{t_2^\ast}, \bm{S}_{t_1^\ast}^{\mathrm{ref}}) p_{\theta,\phi}(\bm{S}_{t_2^\ast} \mid \bm{S}_{t_1^\ast}^{\mathrm{ref}} ) d \bm{S}_{t_2^\ast}}{\int p_{\theta,\phi}(\bm{S}_t \mid \bm{S}_{t_2^\ast}, \bm{S}_{t_1^\ast}^{\mathrm{ref}}) p_{\theta,\phi}(\bm{S}_{t_2^\ast} \mid \bm{S}_{t_1^\ast}^{\mathrm{ref}} ) d \bm{S}_{t_2^\ast}} \\
&=
\frac{\int \mathcal{R}(\mathcal{T}_{\theta, t_2^\ast:0}^{\mathrm{ref}}(\bm{S}_{t_2^\ast}), \bm{S}^{\mathrm{ref}}) q(\bm{S}_t \mid \bm{S}_{t_2^\ast}, \bm{S}_{t_1^\ast}^{\mathrm{ref}}) p_{\theta,\phi}(\bm{S}_{t_2^\ast} \mid \bm{S}_{t_1^\ast}^{\mathrm{ref}} ) d \bm{S}_{t_2^\ast}}{\int q(\bm{S}_t \mid \bm{S}_{t_2^\ast}, \bm{S}_{t_1^\ast}^{\mathrm{ref}}) p_{\theta,\phi}(\bm{S}_{t_2^\ast} \mid \bm{S}_{t_1^\ast}^{\mathrm{ref}} ) d \bm{S}_{t_2^\ast}} \\
&=
\frac{\int \mathcal{R}(\mathcal{T}_{\theta, t_2^\ast:0}^{\mathrm{ref}}(\bm{S}_{t_2^\ast}), \bm{S}^{\mathrm{ref}}) \cancel{q(\bm{S}_{t_1^\ast}^{\mathrm{ref}} \mid \bm{S}_t)} \frac{q(\bm{S}_t \mid \bm{S}_{t_2^\ast})}{q(\bm{S}_{t_1^\ast}^{\mathrm{ref}} \mid \bm{S}_{t_2^\ast})} p_{\theta,\phi}(\bm{S}_{t_2^\ast} \mid \bm{S}_{t_1^\ast}^{\mathrm{ref}} ) d \bm{S}_{t_2^\ast}}{\int \cancel{q(\bm{S}_{t_1^\ast}^{\mathrm{ref}} \mid \bm{S}_t)}\frac{q(\bm{S}_t \mid \bm{S}_{t_2^\ast})}{q(\bm{S}_{t_1^\ast}^{\mathrm{ref}} \mid \bm{S}_{t_2^\ast})} p_{\theta,\phi}(\bm{S}_{t_2^\ast} \mid \bm{S}_{t_1^\ast}^{\mathrm{ref}} ) d \bm{S}_{t_2^\ast}} \\
&=
\frac{
\mathbb{E}_{
p_{\theta,\phi}(
\bm{S}_{t_2^\ast}
\mid
\bm{S}_{t_1^\ast}^{\mathrm{ref}}
)
}
\left[
\frac{
q(\bm{S}_t\mid \bm{S}_{t_2^\ast})
}{
q(\bm{S}_{t_1^\ast}^{\mathrm{ref}}\mid \bm{S}_{t_2^\ast})
}
\mathcal{R}(
\mathcal{T}_{\theta,t_2^\ast:0}^{\mathrm{ref}}(\bm{S}_{t_2^\ast}),
\bm{S}^{\mathrm{ref}}
)
\right]
}{
\mathbb{E}_{
p_{\theta,\phi}(
\bm{S}_{t_2^\ast}
\mid
\bm{S}_{t_1^\ast}^{\mathrm{ref}}
)
}
\left[
\frac{
q(\bm{S}_t\mid \bm{S}_{t_2^\ast})
}{
q(\bm{S}_{t_1^\ast}^{\mathrm{ref}}\mid \bm{S}_{t_2^\ast})
}
\right]
} 
\label{eq:proof-value-definition}
\end{aligned}
\end{equation*}
\end{proof}

\subsubsection{Theorem 2}
\label{appx:theorem2-0}

\begin{theorem}
Given $M$ samples
$\{\bm{S}_{t_2^\ast}^{(m)}\}_{m=1}^{M}$
from $p_{\theta,\phi}(\bm{S}_{t_2^\ast}\mid \bm{S}_{t_1^\ast}^{\mathrm{ref}})$,
the value of an intermediate state $\bm{S}_t$ in Eq.~\ref{eq:value_estimator} 
can be estimated as
\vspace{-5pt}
    \begin{equation}
    \widehat V_{\phi}(
    \bm{S}_t\mid \bm{S}_{t_1^\ast}^{\mathrm{ref}}
    )
    =
    \sum_{m=1}^{M}
    \operatorname{softmax}_m(\{\ell_t^{(j)}\}_{j=1}^M
    )
    \mathcal R(
    \mathcal T_{\theta,t_2^\ast:0}^{\mathrm{ref}}
    (\bm{S}_{t_2^\ast}^{(m)}),
    \bm{S}^{\mathrm{ref}}
    ),
    \label{eq:value_est_mc}
    \end{equation}
where
\vspace{-10pt}
    \begin{equation}
    \ell_t^{(m)}
    =
    \log q(
    \bm{S}_t\mid \bm{S}_{t_2^\ast}^{(m)}
    )
    -
    \log q(
    \bm{S}_{t_1^\ast}^{\mathrm{ref}}
    \mid
    \bm{S}_{t_2^\ast}^{(m)}
    ).
    \label{eq:logtransition}
    \end{equation}
\end{theorem}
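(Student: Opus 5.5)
The estimator in Theorem 2 is the self-normalized Monte Carlo approximation of the ratio in Theorem 1, written in log-space. The plan is to approximate numerator and denominator by empirical averages, cancel the common normalizer, and recognize the resulting normalized weights as a softmax.

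\textbf{Step 1: empirical replacement.} Since $\{\bm{S}_{t_2^\ast}^{(m)}\}_{m=1}^M$ are drawn from $p_{\theta,\phi}(\bm{S}_{t_2^\ast}\mid \bm{S}_{t_1^\ast}^{\mathrm{ref}})$, we replace this distribution by the empirical measure $\frac{1}{M}\sum_m \delta_{\bm{S}_{t_2^\ast}^{(m)}}$. The numerator of Eq.~\ref{eq:value_estimator} becomes $\frac1M\sum_m w^{(m)}R^{(m)}$ and the denominator becomes $\frac1M\sum_m w^{(m)}$. Here $w^{(m)}=w(\bm{S}_{t_2^\ast}^{(m)};\bm{S}_t,\bm{S}_{t_1^\ast}^{\mathrm{ref}})$ and $R^{(m)}=\mathcal{R}(\mathcal{T}^{\mathrm{ref}}_{\theta,t_2^\ast:0}(\bm{S}_{t_2^\ast}^{(m)}),\bm{S}^{\mathrm{ref}})$. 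The factors $\frac1M$ cancel, giving $\widehat V_\phi=\sum_m \big(w^{(m)}/\sum_j w^{(j)}\big) R^{(m)}$.

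\textbf{Step 2: rewrite in log-space.} Because $w^{(m)}>0$, we can write $w^{(m)}=\exp(\ell_t^{(m)})$ with $\ell_t^{(m)}$ as in Eq.~\ref{eq:logtransition}. The weights $\exp(\ell_t^{(m)})/\sum_j\exp(\ell_t^{(j)})$ are then exactly $\operatorname{softmax}_m(\{\ell_t^{(j)}\}_{j=1}^M)$, which yields Eq.~\ref{eq:value_est_mc}. Positivity of $w^{(m)}$ holds because $q(\cdot\mid\cdot)$ is a product of a Gaussian density over positions and a categorical probability over types. The categorical transition $\operatorname{Cat}(\alpha^c\bm{e}+(1-\alpha^c)\mathbf{1}/d)$ assigns positive mass to every category, so both $q$ terms are strictly positive. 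In log-space, the position part of $\ell_t^{(m)}$ reduces to differences of squared distances scaled by forward-process variances. The type part is a sum of log entries of multistep categorical transition matrices. This also explains why the paper uses the log/softmax form: it avoids numerical underflow in high dimension.

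\textbf{Step 3: justify ``estimate''.} By the strong law of large numbers, each empirical average converges almost surely to its expectation. The denominator limit is strictly positive, so by the continuous mapping theorem the ratio is a consistent (though biased for finite $M$) estimator of $V_\phi(\bm{S}_t\mid\bm{S}_{t_1^\ast}^{\mathrm{ref}})$. This needs the expectations to be finite. The reward satisfies $\mathcal R\in[0,1]$ by Eq.~\ref{eqn:reward}, since both similarities lie in $[0,1]$ and invalid scaffolds get reward $0$. Hence integrability of the numerator follows from integrability of $w$, which is the only nontrivial condition.

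\textbf{Main obstacle.} The hardest step is the integrability of $w$ in Step 3, which rests on a lower bound for $q(\bm{S}_{t_1^\ast}^{\mathrm{ref}}\mid\bm{S}_{t_2^\ast})$. The categorical part is bounded below by a positive constant. The Gaussian part, however, decays in $\|\bm{X}_{t_2^\ast}\|$, so $w$ could be unbounded. I would first try to show that $\bm{S}_{t_2^\ast}$ has Gaussian tails under $p_{\theta,\phi}$, since it is an affine Gaussian-driven recursion from a fixed start. Combined with a comparison of the forward-process variances, this would give a finite expectation. If that fails, I would present the result purely as the plug-in identity (Steps 1--2), which is what the statement literally asserts.
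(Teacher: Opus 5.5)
Your Steps 1--2 are exactly the paper's proof: replace both expectations in Theorem~1 by empirical averages over the $M$ samples, cancel the $1/M$, normalize the weights $w_t^{(m)}$, and write $w_t^{(m)}=\exp(\ell_t^{(m)})$ so that the normalized weights are $\operatorname{softmax}_m(\{\ell_t^{(j)}\}_{j=1}^M)$; your positivity remark and the consistency argument in Step 3 go beyond what the paper proves. The ``main obstacle'' you flag is not a real obstacle: since $t<t_1^\ast$ gives $\alpha^x_{t|t_2^\ast}>\alpha^x_{t_1^\ast|t_2^\ast}$ and $x\mapsto x/(1-x)$ is increasing, the coefficient of $\|\bm{X}_{t_2^\ast}\|_2^2$ in $\ell^{(m)}_{t,\bm{X}}$ is $-\tfrac12\bigl(\alpha^x_{t|t_2^\ast}/\sigma^2_{t|t_2^\ast}-\alpha^x_{t_1^\ast|t_2^\ast}/\sigma^2_{t_1^\ast|t_2^\ast}\bigr)<0$, so the numerator kernel decays faster than the denominator kernel, $w$ is uniformly bounded in $\bm{S}_{t_2^\ast}$ (the categorical ratio is at most $(d/(1-\alpha^c_{t_1^\ast|t_2^\ast}))^n$), and integrability follows immediately without any tail estimate on $p_{\theta,\phi}$.
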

The two log-transition terms in $\ell_t^{(m)}$ are calculated in closed form using the Gaussian forward kernel for atom positions and the categorical forward kernel for atom types.
We provide the proof and detailed derivation of the log-transition terms below.

\paragraph{Proof for theorem 2}
\label{appx:theorem2}
\begin{proof}
By applying a Monte Carlo approximation to the expectations in Eq.~\ref{eq:value_estimator}, we obtain
\begin{equation*}
\widehat{V}_\phi(\bm{S}_t \mid \bm{S}_{t_1^\ast}^{\mathrm{ref}} )
=
\frac{\sum_{m=1}^M \frac{q(\bm{S}_t \mid \bm{S}_{t_2^\ast}^{(m)})}{q(\bm{S}_{t_1^\ast}^{\mathrm{ref}} \mid \bm{S}_{t_2^\ast}^{(m)})} \mathcal{R}(\mathcal{T}_{\theta, t_2^\ast:0}^{\mathrm{ref}}(\bm{S}_{t_2^\ast}^{(m)}), \bm{S}^{\mathrm{ref}})}{\sum_{m=1}^M \frac{q(\bm{S}_t \mid \bm{S}_{t_2^\ast}^{(m)})}{q(\bm{S}_{t_1^\ast}^{\mathrm{ref}} \mid \bm{S}_{t_2^\ast}^{(m)})}},
\end{equation*}
with ${w}_t^{(m)}=\frac{q(\bm{S}_t \mid \bm{S}_{t_2^\ast}^{(m)})}{q(\bm{S}_{t_1^\ast}^{\mathrm{ref}} \mid \bm{S}_{t_2^\ast}^{(m)})}$ and $\widetilde{w}_t^{(m)}=\frac{{w}_t^{(m)}}{\sum_{j=1}^M {w}_t^{(j)}}$,
\begin{equation*}
\widehat{V}_\phi(\bm{S}_t \mid \bm{S}_{t_1^\ast}^{\mathrm{ref}} )
=
\sum_{m=1}^M \widetilde{w}_t^{(m)} \mathcal{R}(\mathcal{T}_{\theta, t_2^\ast:0}^{\mathrm{ref}}(\bm{S}_{t_2^\ast}^{(m)}), \bm{S}^{\mathrm{ref}}).
\end{equation*}
Define
\[
\ell_t^{(m)}=\log q(\bm{S}_t \mid \bm{S}_{t_2^\ast}^{(m)})-\log q(\bm{S}_{t_1^\ast}^{\mathrm{ref}} \mid \bm{S}_{t_2^\ast}^{(m)}),
\]
then,
\begin{equation*}
\widetilde{w}_t^{(m)}=\frac{\exp (\ell_t^{(m)})}{\sum_{j=1}^M \exp (\ell_t^{(j)})}=\operatorname{softmax}_m(\{\ell_t^{(j)}\}_{j=1}^M).
\end{equation*}
Therefore,
\begin{equation*}
    \widehat{V}_\phi(
    \bm{S}_t\mid \bm{S}_{t_1^\ast}^{\mathrm{ref}}
    )
    =
    \sum_{m=1}^{M}
    \operatorname{softmax}_m(\{\ell_t^{(j)}\}_{j=1}^M
    )
    \mathcal R(
    \mathcal T_{\theta,t_2^\ast:0}^{\mathrm{ref}}
    (\bm{S}_{t_2^\ast}^{(m)}),
    \bm{S}^{\mathrm{ref}}
    ).
    \end{equation*}
\end{proof}

\paragraph{Detailed derivations of log transitions}
\label{appx:closeform}

\begin{proof}
The forward transition factorizes over atom positions and atom types,
\begin{equation*}
q(\bm{S}_t \mid \bm{S}_{t_2^\ast})
=
q(\bm{X}_t \mid \bm{X}_{t_2^\ast})
q(\bm{V}_t \mid \bm{V}_{t_2^\ast}),
\end{equation*}
then,
\begin{equation*}
\log q(\bm{S}_t \mid \bm{S}_{t_2^\ast})
=
\log q(\bm{X}_t \mid \bm{X}_{t_2^\ast})
+
\log q(\bm{V}_t \mid \bm{V}_{t_2^\ast}).
\end{equation*}
Similarly,
\begin{equation*}
\log q(
\bm{S}_{t_1^\ast}^{\mathrm{ref}}
\mid
\bm{S}_{t_2^\ast}
)
=
\log q(
\bm{X}_{t_1^\ast}^{\mathrm{ref}}
\mid
\bm{X}_{t_2^\ast}
)
+
\log q(
\bm{V}_{t_1^\ast}^{\mathrm{ref}}
\mid
\bm{V}_{t_2^\ast}
).
\end{equation*}
Therefore,
\begin{equation*}
\ell_t^{(m)}
=
\ell_{t,{\bm{X}}}^{(m)}
+
\ell_{t,{\bm{V}}}^{(m)},
\end{equation*}
where
\begin{equation*}
\ell_{t,{\bm{X}}}^{(m)}
=
\log q(
\bm{X}_t \mid \bm{X}_{t_2^\ast}^{(m)}
)
-
\log q(
\bm{X}_{t_1^\ast}^{\mathrm{ref}}
\mid
\bm{X}_{t_2^\ast}^{(m)}
),
\end{equation*}
and
\begin{equation*}
\ell_{t,{\bm{V}}}^{(m)}
=
\log q(
\bm{V}_t \mid \bm{V}_{t_2^\ast}^{(m)}
)
-
\log q(
\bm{V}_{t_1^\ast}^{\mathrm{ref}}
\mid
\bm{V}_{t_2^\ast}^{(m)}
).
\end{equation*}

For atom positions, the DDPM forward process admits the closed-form transition
\begin{equation*}
q(
\bm{X}_t \mid \bm{X}_{t_2^\ast}
)
=
\mathcal{N}(
\bm{X}_t;
\sqrt{\frac{\bar{\alpha}_t^x}{\bar{\alpha}_{t_2^\ast}^x}}\,\bm{X}_{t_2^\ast},
(
1-\frac{\bar{\alpha}_t^x}{\bar{\alpha}_{t_2^\ast}^x}
)\bm{I}
).
\end{equation*}
Let
\begin{equation*}
\alpha_{t|t_2^\ast}^x
=
\frac{\bar{\alpha}_t^x}{\bar{\alpha}_{t_2^\ast}^x},
\qquad
\sigma_{t|t_2^\ast}^{2}
=
1-\alpha_{t|t_2^\ast}^x.
\end{equation*}
Then, for a scaffold with $n$ atoms,
\begin{equation*}
\log q(
\bm{X}_t \mid \bm{X}_{t_2^\ast}^{(m)}
)
=
-\frac{3n}{2}\log(2\pi\sigma_{t|t_2^\ast}^{2})
-
\frac{
\left\|
\bm{X}_t-\sqrt{\alpha_{t|t_2^\ast}^x}\bm{X}_{t_2^\ast}^{(m)}
\right\|_2^2
}{2\sigma_{t|t_2^\ast}^{2}}.
\end{equation*}

Similarly,
\begin{equation*}
\log q(
\bm{X}_{t_1^\ast}^{\mathrm{ref}}
\mid
\bm{X}_{t_2^\ast}^{(m)}
)
=
-\frac{3n}{2}\log(2\pi\sigma_{t_1^\ast|t_2^\ast}^{2})
-
\frac{
\left\|
\bm{X}_{t_1^\ast}^{\mathrm{ref}}
-\sqrt{\alpha_{t_1^\ast|t_2^\ast}^x}\bm{X}_{t_2^\ast}^{(m)}
\right\|_2^2
}{
2\sigma_{t_1^\ast|t_2^\ast}^{2}
}.
\end{equation*}
Therefore,
\begin{align*}
\ell_{t,\bm{X}}^{(m)}
&=
-\frac{3n}{2}
\log
\frac{
\sigma_{t|t_2^\ast}^{2}
}{
\sigma_{t_1^\ast|t_2^\ast}^{2}
}
-
\frac{
\left\|
\bm{X}_t-\sqrt{\alpha_{t|t_2^\ast}^x}\bm{X}_{t_2^\ast}^{(m)}
\right\|_2^2
}{
2\sigma_{t|t_2^\ast}^{2}
}
\nonumber\\
&\quad+
\frac{
\left\|
\bm{X}_{t_1^\ast}^{\mathrm{ref}}
-\sqrt{\alpha_{t_1^\ast|t_2^\ast}^x}\bm{X}_{t_2^\ast}^{(m)}
\right\|_2^2
}{
2\sigma_{t_1^\ast|t_2^\ast}^{2}
}.
\end{align*}
For atom types, the forward process is
\begin{equation*}
q(
\bm{V}_t \mid \bm{V}_{t_2^\ast}
)
=
\operatorname{Cat}(
\alpha_{t|t_2^\ast}^c
\operatorname{onehot}(\bm{V}_{t_2^\ast})
+
(1-\alpha_{t|t_2^\ast}^c)
\frac{1}{d}\mathbf{1}
),
\end{equation*}
where
\begin{equation*}
\alpha_{t|t_2^\ast}^c
=
\frac{\bar{\alpha}_t^c}{\bar{\alpha}_{t_2^\ast}^c}.
\end{equation*}
For a $n$-atom scaffold,
\begin{equation*}
\log q(
\bm{V}_t \mid \bm{V}_{t_2^\ast}^{(m)}
)
=
\sum_{i=1}^{n}
\log
\left[
\alpha_{t|t_2^\ast}^c
\operatorname{onehot}(V_{t_2^\ast,i}^{(m)})
+
(1-\alpha_{t|t_2^\ast}^c)
\frac{1}{d}\mathbf{1}
\right]_{V_{t,i}},
\end{equation*}
and
\begin{equation*}
\log q(
\bm{V}_{t_1^\ast}^{\mathrm{ref}}
\mid \bm{V}_{t_2^\ast}^{(m)}
)
=
\sum_{i=1}^{n}
\log
\left[
\alpha_{t_1^\ast|t_2^\ast}^c
\operatorname{onehot}(V_{t_2^\ast,i}^{(m)})
+
(1-\alpha_{t_1^\ast|t_2^\ast}^c)
\frac{1}{d}\mathbf{1}
\right]_{V_{t_1^\ast,i}^{\mathrm{ref}}}.
\end{equation*}
Therefore,
\begin{align*}
\ell_{t,\bm{V}}^{(m)}
&=
\sum_{i=1}^{n}
\log
(
\left[
\alpha_{t|t_2^\ast}^c
\operatorname{onehot}(V_{t_2^\ast,i}^{(m)})
+
(1-\alpha_{t|t_2^\ast}^c)
\frac{1}{d}\mathbf{1}
\right]_{V_{t,i}}
)
\\&
\quad-
\sum_{i=1}^{n}
\log
(
\left[
\alpha_{t_1^\ast|t_2^\ast}^c
\operatorname{onehot}(V_{t_2^\ast,i}^{(m)})
+
(1-\alpha_{t_1^\ast|t_2^\ast}^c)
\frac{1}{d}\mathbf{1}
\right]_{V_{t_1^\ast,i}^{\mathrm{ref}}}
).
\end{align*}
\end{proof}

\subsection{Baselines for Experiments}
\label{appx:settings:baselines}

We evaluate three types of baselines on scaffold hopping: 
\textbf{(1)} SBDD models adapted to scaffold hopping (Appendix~\ref{app:finetune_SBDD}), 
\textbf{(2)} a structure-based scaffold hopping model, \diffhopp~\citep{torge_diffhopp} and 
\textbf{(3)} a ligand-based scaffold hopping model, \shepherd~\citep{adams_shepherd}.
For SBDD models,
we choose two recent state-of-the-art
diffusion-based models, \conditar\citep{gao_conditardev} and \ipdiff\citep{huang_ipdiff},
due to their strong generation performance and the availability of public checkpoints for adaptation.
To enable stable trajectory recovery and better preservation of reference shape and interaction with the binding pockets, 
we slightly modify \ipdiff's generation process,
with details provided in Appendix~\ref{appx:ipdiff_modify}.
We denote the adapted models of \conditar and \ipdiff as \ftconditar and \ftipdiff, respectively.
We also include \diffhopp, the only structure-based diffusion model for scaffold hopping
 with publicly available checkpoints.
These three models, \ftconditar, \ftipdiff, and \diffhopp, serve two roles in our experiments: 
they are evaluated directly as scaffold hopping baselines, 
and they are also used as the base models on which \method performs inference-time 
editing.
In addition to these models, we use a ligand-based model, \shepherd, as a baseline.
\shepherd performs scaffold hopping via ligand-based inpainting given 3D shape, functional group, and/or electrostatic conditions.
We do not use \shepherd as a base model for \method because it generates molecules through inpainting,
making both \method and its adaptation strategy not directly applicable.

\subsection{Evaluation Metrics}
\label{appx:settings:metrics}

We evaluate the generated scaffolds from two perspectives: 
\textbf{(1)} similarities with respect to the reference ligands, and 
\textbf{(2)} general physicochemical 
properties for drug development.
Please note, we use similarities between entire molecules as a proxy for scaffold similarity,
as functional groups already remain fixed for a given reference.
For similarities with the reference ligands, we use \simtwod, which is measured by the Tanimoto distance between molecular fingerprints, 
and \simthreed, which is calculated using ROCS tool following {\citet{hawkins2007comparison}}.
To ensure that reference functional-group atoms remain at the same positions in the generated and reference molecules, 
we skip the pre-alignment step in ROCS before \simthreed computation, 
ensuring \simthreed reflects the generated pose.
For general physicochemical properties, 
we report three binding affinity metrics predicted by AutoDock Vina \citep{eberhardt_autodock}:
Vina Score (Vina S) on the original generated pose, Vina Minimization (Vina M) after local energy minimization, and Vina Dock (Vina D) after docking.
We also use quantitative estimates of drug-likeness (QED) and synthetic accessibility (SA) following \citet{luo20213d} as metrics to assess the overall physicochemical profile and 
synthesizability
of generated molecules.
In addition, following \citet{torge_diffhopp}, we evaluate \connectivity,  
defined as the percentage of generated molecules in which the generated scaffold is successfully connected to reference functional groups.
For each reference ligand, we generate 100 samples
and consider only 
the resulting valid and connected molecules for evaluation.
Each metric is first averaged over the generated molecules on a per-reference basis and then across all test complexes.

\subsection{Anchor molecule selection for \methodite}
\label{appx:molecule_selection}
For \methodite, 
we select the anchor molecule from the molecules generated in \hopone using a simple hierarchical filtering procedure. 
For each reference, 
we first retain the top 30\% of molecules according to 3D shape similarity to the reference, and then select the candidate with the lowest 2D similarity as the anchor for \hoptwo.

\subsection{Implementation Details and Reproducibility}
\label{appx:implementation}
\methodonehop uses $\lambda=1$ in its reward for reference-optimal perturbation segment selection for all models.
\methodite follows the same settings, except for \ftipdiff, where $\lambda=0.7$  segment selection.
During value-guided scaffold sampling,
\methodonehop uses $\lambda=0.8$ for its reward,
while \methodite uses $\lambda=0.5$ to better encourage retaining the reference 3D shape over multiple hops.
Both \ftconditar and \ftipdiff use $T=1000$, following their original papers~\citep{gao_conditardev, huang_ipdiff}.
For these models, \methodonehop and \methodite use $N=10$, $n_1=5$ and $n_2=9$.
\diffhopp uses $T=500$, following its original paper~\citep{torge_diffhopp}.
Therefore, \methodonehop and \methodite use $N=5$, $n_1=3$ and $n_2=5$ for this model to maintain the same segment length as other models.
For selecting optimal segment  in~\ref{eq:timestep_score}, we use $K=100$.
For estimating the value in~\ref{eq:value_est_mc}, we use $M=1000$.

\subsection{Modification to \ipdiff's Generation Process}
\label{appx:ipdiff_modify}
\ipdiff extends the standard SBDD diffusion process by incorporating protein-ligand interaction embeddings into both the forward and reverse diffusion processes.
During generation, \ipdiff dynamically extracts the interaction embeddings at each timestep using the predicted clean ligand from the previous timestep,
leading to transition distributions evolving over the denoising trajectory.
To enable a consistent and stable trajectory recovery and preserve the reference 3D shape and pocket-ligand interaction pattern, 
we modify \ipdiff's original inference process by fixing the interaction embeddings to those extracted from the anchor ligand during generation in both hopping iterations.

\subsection{Additional Experimental Results}
\label{appx:experimental_results}

\begin{table}[htbp]
\centering
\caption{
Comparison of generation results with and without trajectory inversion using \ftconditar as the base model.
Random perturbation corresponds to generation under $p_\phi(\cdot\mid t_1^\ast)$, while value-guided sampling further performs search from $t_1^\ast$ to $t=0$.
Best and second-best results are shown in \textbf{bold} and \underline{underlined}, respectively.
}
\label{tab:inversion_comparison}

\begin{threeparttable}
\setlength{\tabcolsep}{4pt}

\begin{tabular*}{\textwidth}{
    @{\extracolsep{\fill}}
    c
    >{\centering\arraybackslash}p{1.6cm}
    l
    r
    r
    r
    r
    r
    r
    r
    r
}
\toprule
\multirow{2}{*}{Inversion}
& \multirow{2}{*}{Sampling}
& \multirow{2}{*}{}
& \multicolumn{2}{c}{Similarity}
& \multicolumn{3}{c}{Vina}
& \multirow{2}{*}{QED $\uparrow$}
& \multirow{2}{*}{SA $\uparrow$}
& \multirow{2}{*}{\shortstack[c]{Conn.\\(\%) $\uparrow$}} \\
\cmidrule(lr){4-5}
\cmidrule(lr){6-8}
&
&
&
\simtwod $\downarrow$
& \simthreed $\uparrow$
& Vina S $\downarrow$
& Vina M $\downarrow$
& Vina D $\downarrow$
&
&
\\
\midrule

\multirow{4}{*}{\cmark}
& \multirow{2}{*}{\shortstack[c]{random\\perturbation}}
& \methodonehop
& $0.391$
& $0.866$
& $\mathbf{-7.837}$
& $\mathbf{-8.308}$
& $\underline{-9.028}$
& $\mathbf{0.462}$
& $\underline{0.629}$
& $92.3$ \\

&
&
\methodite
& $0.356$
& $0.854$
& $-7.763$
& $-8.232$
& $-8.915$
& $0.450$
& $0.620$
& $95.2$ \\

\addlinespace[2pt]
\cmidrule{2-11}
\addlinespace[2pt]

&
\multirow{2}{*}{\shortstack[c]{value-guided\\sampling}}
& \methodonehop
& $0.340$
& $0.882$
& $-7.612$
& $-8.157$
& $-8.904$
& $0.440$
& $0.628$
& $95.1$ \\

&
&
\methodite
& $\underline{0.334}$
& $0.887$
& $-7.539$
& $-8.054$
& $-8.806$
& $0.432$
& $0.621$
& $93.8$ \\

\addlinespace[2pt]
\cmidrule{1-11}
\addlinespace[2pt]

\multirow{4}{*}{\xmark}
& \multirow{2}{*}{\shortstack[c]{random\\perturbation}}
& \methodonehop
& $0.401$
& $0.833$
& $-7.668$
& $-8.144$
& $-8.794$
& $\underline{0.460}$
& $0.623$
& $91.7$ \\

&
&
\methodite
& $0.354$
& $0.845$
& $\underline{-7.825}$
& $\underline{-8.275}$
& $-8.853$
& $0.450$
& $0.622$
& $\mathbf{97.2}$ \\

\addlinespace[2pt]
\cmidrule{2-11}
\addlinespace[2pt]

&
\multirow{2}{*}{\shortstack[c]{value-guided\\sampling}}
& \methodonehop
& $0.353$
& $0.852$
& $-7.497$
& $-7.985$
& $-8.757$
& $0.426$
& $\mathbf{0.634}$
& $\underline{95.9}$ \\

&
&
\methodite
& $0.341$
& $0.884$
& $-7.644$
& $-8.220$
& $\mathbf{-9.053}$
& $0.443$
& $0.624$
& $94.7$ \\

\bottomrule
\end{tabular*}
\end{threeparttable}
\end{table}

\begin{figure}[t]
\begin{center}
\includegraphics[width=\linewidth]{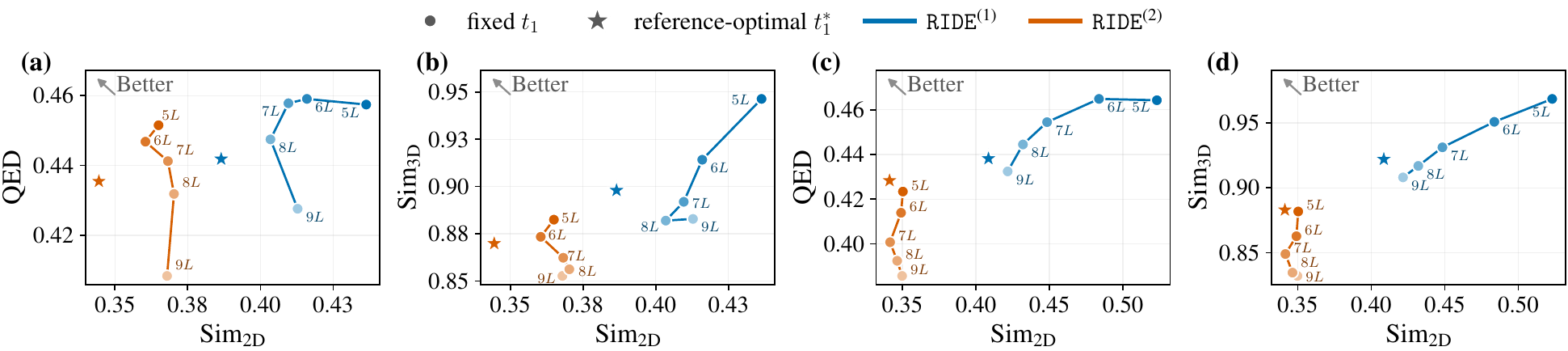}
\caption{
Comparison of Sim$_{\text{2D}}$ vs. QED and Sim$_{\text{2D}}$ vs. Sim$_{\text{3D}}$ results across $t_1$ values ($\{nL\}_{n=5}^{9}$) and  $t_1^\ast$ with $L=T/10$, under reward function of Sim$_{\text{2D}}$ and QED. \textbf{(a)}-\textbf{(b)}: \ftconditar, \textbf{(c)}-\textbf{(d)}: \ftipdiff.
}
\label{fig:qed}
\end{center}
\end{figure}

\begin{figure}[htbp]
\begin{center}
\includegraphics[width=0.5\linewidth]{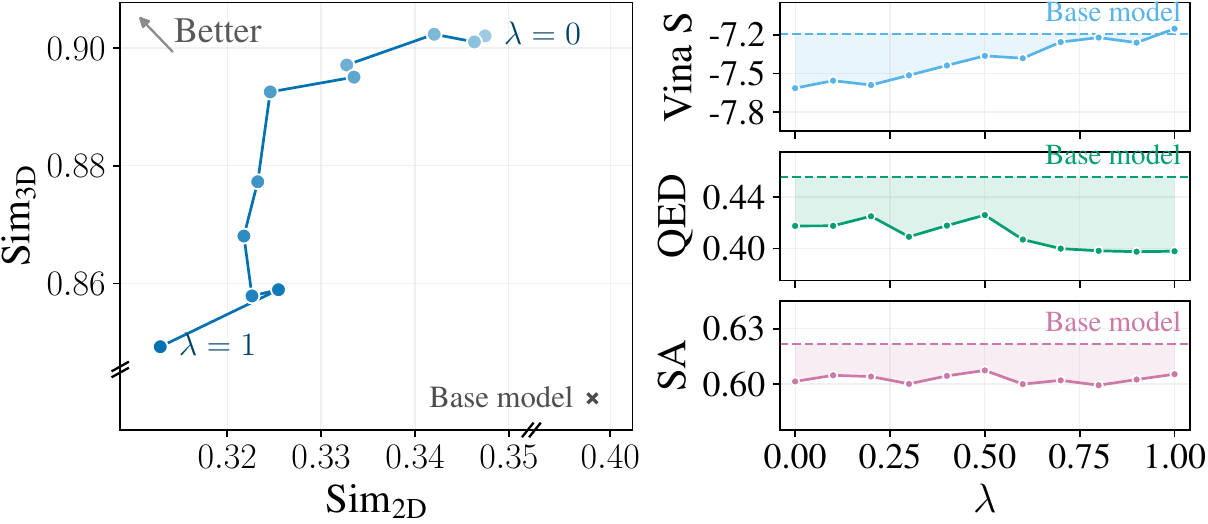}
\caption{
Sensitivity analysis of the reward weight $\lambda$ in the second hop.
}
\label{fig:lambda}
\end{center}
\end{figure}

\paragraph{Comparison of Value-Guided Sampling and Random Perturbation under QED optimization} We observe a similar behavior to that in Figure~\ref{fig:setting_a} when the downstream objective is changed from \simthreed to QED, as shown in Figure~\ref{fig:qed}.
Across candidate $t_1$ values, 
lower \simtwod is accompanied by changes in QED, showing that the perturbation starting timestep also controls the trade-off between the \simtwod and QED objectives.
By comparing the QED curves with the \simthreed curves, 
we observe that QED and \simthreed show similar responses to the perturbation starting timestep, 
with higher QED generally corresponding to higher \simthreed under random perturbation.
This suggests that 3D preservation is partially aligned with good QED.
Our similarity-based dynamic timestep selection also identifies a reasonable starting point balancing \simtwod, QED and \simthreed for subsequent optimization.

\paragraph{Comparison of Inversion and Non-Inversion}
By comparing results with and without inversion in Table~\ref{tab:inversion_comparison},
we observe that removing inversion leads to worse \simthreed and \simtwod, 
highlighting the importance of trajectory inversion in balancing 2D novelty and 3D shape preservation. 
With \methodonehop, trajectory inversion consistently improves both objectives, increasing \simthreed from $0.833$ to $0.866$ at $t_1^\ast$ and from $0.852$ to $0.882$ after search, 
while also reducing \simtwod. 
With \methodite, the improvement is more modest but remains consistent, with \simthreed increasing from $0.845$ to $0.854$ at $t_1^\ast$ and from $0.884$ to $0.887$ after search, 
together with a further improvement in \simtwod after value-guided sampling.

\paragraph{Sensitivity to $\mathbf{\lambda}$}

Figure~\ref{fig:lambda} reports the results of \hoptwo with $\lambda$ varied from $0$ to $1$ in increments of $0.1$.
Across the tested values, increasing $\lambda$ is associated with lower \simtwod and a gradual decrease in \simthreed.
This trend is consistent with the role of $\lambda$ in the reward, where larger values prioritize 2D dissimilarity over 3D similarity.
The other molecular properties, including QED and SA, remain relatively stable across different $\lambda$ values.
Vina S shows moderate variation with better values generally observed at higher \simthreed.
Among the Pareto-optimal subset of the tested $\lambda$ values, $\lambda=0.5$ provides a favorable trade-off between low \simtwod and high \simthreed.

\end{document}